\documentclass[journal]{IEEEtran}
\usepackage{cite}

\ifCLASSINFOpdf
\else
\fi
\usepackage{url} 
\usepackage{graphicx}
\usepackage{subcaption}
\usepackage{booktabs}
\usepackage{multirow}
\usepackage{pifont}
\newcommand{\cmark}{\ding{51}}
\newcommand{\xmark}{\ding{55}}
\usepackage{xcolor}
\usepackage{amsmath, amssymb, amsthm}

\newtheorem{theorem}{Theorem}
\newtheorem{proposition}{Proposition}

\newtheorem{lemma}{Lemma}
\newtheorem{remark}{Remark}

\begin{document}
%
\title{OmniLoc: A Geometry-Aware Foundation Model for Anchor-Free UE Localization Across Diverse Indoor Environments}

\author{Lei~Chu,~\IEEEmembership{Senior Member,~IEEE,}
        ~Yuning ~Zhang, ~Omer Gokalp ~Serbetci, ~Anushka ~Katiyar,  ~Bassel Abou Ali ~Modad, 
        and~Andreas F. ~Molisch,~\IEEEmembership{~Fellow,~IEEE}
}

\markboth{Journal of \LaTeX\ Class Files,~Vol.~14, No.~8, August~2015}%
{Shell \MakeLowercase{\textit{et al.}}: Bare Demo of IEEEtran.cls for IEEE Journals}

\maketitle

\begin{abstract}
Indoor localization from wireless measurements remains challenging in large-scale deployments due to substantial variation in building geometry, the set of detectable access points (APs), and the heterogeneity of received signals. Existing learning-based methods often perform well only in limited settings and degrade under environmental shifts, making robust anchor-free localization across diverse indoor environments notoriously difficult. In this paper, we present OmniLoc, an environment-interactive foundation model for anchor-free user equipment localization across diverse indoor environments. To the best of our knowledge, OmniLoc is the first foundation-model-based approach built directly on wireless measurements for this task. OmniLoc is built on three key designs. First, a unified input tokenization module converts heterogeneous wireless measurements into a common representation that is more amenable to learning. Second, a geometry-aware Transformer performs AP-aware feature extraction by emphasizing dominant APs while aggregating complementary evidence from supporting APs. Third, a geometry-aware location estimation module conditions regression on geometric embeddings to produce geometrically consistent location predictions. We evaluate OmniLoc on both a large-scale in-house dataset and a public benchmark dataset. Results show that OmniLoc significantly outperforms existing methods, consistently improves existing backbones when its design components are integrated, and demonstrates strong generalization in cross-environment evaluations.
\end{abstract}

\begin{IEEEkeywords}
Anchor-Free Localization, Geometry-Aware Foundation Model, Unified Embedding, Diverse Environments.


\end{IEEEkeywords}

\IEEEpeerreviewmaketitle

\section{Introduction}

Indoor user equipment (UE) localization has become an essential capability for next-generation wireless systems \cite{zekavat2019handbook}, \cite[Chap. 29]{molisch2022wireless}, \cite{witrisal2016high,zafari2019survey},  enabling a wide range of applications such as context-aware services \cite{zhang2020context}, asset tracking \cite{mayer2023self}, emergency response \cite{ferreira2017localization}, and intelligent building management \cite{elbes2022platform}. ``Classical'' methods such as those based on time of arrival (which includes the popular Global Positioning System GPS and cellular 911 localization), direction of arrival, and proximity sensing, have made substantial progress, but still face fundamental challenges in particular in indoor environments. For these reasons, machine-learning (ML) based techniques have gained popularity in particular for indoor localization, see the surveys  \cite{burghal2020comprehensive, singh2021machine, tiku2023overview,kerdjidj2024uncovering} and references therein. However, this literature also shows the challenges in this approach: real deployments exhibit strong geometric diversity across environments, large variation in the number of detectable APs, and highly heterogeneous received signal strengths caused by different propagation conditions \cite{aditya2018survey}, blockage \cite{chu2024exploiting}, and device-environment interactions \cite{torsoli2023blockage}. 
These factors make it challenging for existing methods to achieve robust and transferable performance, especially in anchor-free settings, where localization relies directly on wireless measurements without explicit infrastructure calibration\cite{xie2015precise, tong2026error}.

Meanwhile, recent advances in  foundation models \cite{xu2024large,ott2024radio,pan2025large,liu2025wifo,si2025cross,aboulfotouh20256g, cheraghinia2025foundation,xiao2026wireless}, have demonstrated high potential for learning transferable representations from large-scale and diverse data. For instance, the pretrained transformer model \cite{ott2024radio} is capable of capturing spatiotemporal patterns and environmental propagation characteristics, thereby enabling highly accurate 5G localization. Using the simulated DeepMIMO dataset \cite{alkhateeb2019deepmimo}, \cite{pan2025large} demonstrated that a transformer-based, self-supervised foundation model tailored for wireless localization can achieve superior accuracy with minimal labeled data, while also exhibiting strong robustness to previously unseen base station configurations. 
However, their direct application to wireless indoor localization from raw measurements remains largely underexplored. Unlike conventional learning tasks, indoor localization demands not only effective modeling of heterogeneous wireless signals \cite{zhang2026wiloc}, 
but also the preservation of the underlying geometric relationships embedded within these observations. This calls for a new framework that jointly captures measurement heterogeneity 
and geometric consistency while generalizing across diverse indoor environments. The core contributions of this work includes: 
\begin{itemize}
    \item We propose OmniLoc, an environment-interactive foundation model for anchor-free user equipment localization across diverse indoor environments. To the best of our knowledge, OmniLoc is the first foundation-model-based approach built directly on wireless measurements, explicitly addressing core challenges such as geometric diversity, variable numbers of detectable APs, and heterogeneous received signal strengths. 
    \item We introduce three key designs in OmniLoc: 1) a unified input tokenization module that converts heterogeneous wireless measurements into a \textit{sentence-like} representation more amenable to learning; 2) a geometry-aware Transformer that performs AP-aware feature extraction by emphasizing dominant APs while aggregating complementary evidence from others; and 3) a geometry-aware location estimation module that conditions regression on geometric embeddings to produce geometrically consistent UE location estimates. 
    \item We conduct extensive case studies on both our collected dataset and a public benchmark dataset. The results show that: 1) OmniLoc achieves significant improvements over existing methods in large-scale and diverse indoor environments; 2) the proposed design components are flexible, readily integrate with existing methods, and consistently improve their performance; and 3) OmniLoc demonstrates strong generalization in cross-environment evaluations. 
\end{itemize}

The remainder of this paper is organized as follows. Section II reviews the related work. Section III introduces the problem formulation and the proposed method. Section IV presents the case studies and corresponding analysis. Finally, Section V concludes the paper. To improve readability, we defer the key theoretical analysis and proofs to the appendix.

\section{Related works}

\subsection{Indoor Localization without Anchors}
Indoor localization has emerged as a critical enabler for location-aware applications \cite{zafari2019survey}.  Traditional anchor-based systems rely on reference nodes with known location — such as Wi-Fi access points (APs) \cite{abbas2019wideep}, Bluetooth beacons \cite{danics2017model}, or Ultra-Wideband (UWB) transceivers \cite{gezici2005localization} — whose positions must be pre-surveyed and maintained, resulting in high deployment costs and labor-intensive calibration; furthermore, in numerous situations the locations of those nodes might not be easily known (e.g., WiFi APs in private apartments) or kept confidential for security reasons \cite{zhang2026wiloc}. Anchor-free localization addresses these limitations by eliminating dependency on infrastructure with known locations, instead inferring spatial relationships directly from inter-node measurements or environmental signals \cite{priyantha2003anchor, youssef2005accurate, shioda2013anchor, tan2013connectivity, zhou2025cooperative}. 

A substantial body of research has explored anchor-free indoor localization using diverse sensing modalities and algorithmic frameworks. Early work combined relative range measurements — obtained via Received Signal Strength Indicator (RSSI) \cite{konings2019springloc} or time-of-flight (ToF) \cite{wendeberg2011anchor} — with multidimensional scaling (MDS) \cite{pei2009anchor, saeed2019state} to reconstruct node configurations without fixed references. Later studies incorporated Inertial Measurement Unit (IMU) data through Pedestrian Dead Reckoning (PDR) \cite{kang2014smartpdr}, fused with opportunistic signals such as geomagnetic fingerprints or barometric pressure to mitigate drift. More recently, deep learning approaches — including Long Short-Term Memory (LSTM) networks \cite{lehyeh2025ubigtloc} for sequential motion modeling — and graph-based Simultaneous Localization and Mapping (SLAM) optimization using loop-closure constraints have further improved positioning accuracy and robustness \cite{chen2021anchor}. Collectively, these works reflect the growing maturity of anchor-free methods, converging toward infrastructure-independent solutions that combine signal processing, probabilistic inference, and data-driven learning. 

\subsection{Wireless Signal Modalities for Indoor Localization}
 
Indoor localization has been explored with a wide range of wireless modalities \cite{wu2012csi, chu2014robust, xiao2016survey, zafari2019survey}, each offering a different trade-off among accuracy, coverage, deployment cost, and robustness. Early systems primarily relied on RSSI and/or signal-to-noise ratio (SNR) or Ssignal-to-interference-plus-noise ratio (SINR)  measurements derived from Wi-Fi, Bluetooth, ZigBee, and cellular signals, since they are readily available in existing infrastructure and require minimal additional deployment. This makes them attractive for scalable fingerprinting and lateration-based systems. However, RSSI is highly sensitive to multipath, temporal variation, device heterogeneity, and environmental dynamics, which often limits localization accuracy. To overcome these limitations, later work \cite{thayaparan2008micro, jung2011tdoa, wu2012csi, yassin2016recent, wen2018joint} exploited richer physical-layer measurements, including channel state information (CSI), time of arrival (ToA), time difference of arrival (TDoA), angle of arrival (AoA), phase, and Doppler features, which provide richer spatial information for more accurate modeling of indoor propagation.  

Despite this progress, UE localization in large-scale, heterogeneous indoor environments remains difficult, largely because effective input representations are hard to design \cite{luo2014piloc,zhang2017large,lymberopoulos2017microsoft}. Prior work often relies on Fourier-domain representations, such as power delay profiles \cite{vinogradov2017physical}, angular power delay profiles \cite{wu2021learning}, and Doppler features \cite{thayaparan2008micro}, or on channel-charting techniques \cite{studer2018channel,taner2025channel} to capture fine-grained signal structure \cite{chintalapudi2010indoor,wu2014smartphones,lymberopoulos2017microsoft,wang2021secure,hu2023wisdom}. However, in large commercial or campus Wi-Fi deployments, CSI phase measurements are often unstable and noisy due to the limited number of transmit and receive antennas, making them unreliable and prone to substantial error \cite{xie2015precise}. Motivated by these limitations, we seek a unified representation that is both robust and expressive. Inspired by interpretable sentence embedding methods \cite{lin2017structured}, our design is tailored to the structure of wireless measurements, making it readily extensible and well suited to large-scale indoor localization.

\subsection{Advanced ML Models for Indoor Localization}

Machine learning has become an important approach to indoor localization \cite{wang2016csi,wang2018deep,burghal2020comprehensive}. Early systems largely relied on classical models, such as extreme learning machines \cite{zhang2017large}, k-nearest neighbors \cite{xie2016improved}, and support vector machines \cite{tran2008localization}. More recent work has shifted toward deep neural networks \cite{wang2018deep}, which learn directly from RSSI, CSI, and other wireless measurements. In particular, methods based on convolutional neural networks (CNNs), LSTMs, and autoencoders outperform classical pipelines by capturing richer spatial and temporal structures in wireless fingerprints. 

More recently, researchers have explored more expressive architectures, including Transformers \cite{masrur2025transforming,wu2025transaoa,pan2025large}, self-supervised pretraining \cite{paulavivcius2022temporal,ott2024radio,salihu2024self}, and geometry-aware representation learning \cite{salihu2024self,chu2024exploiting}. These approaches better capture complex dependencies among wireless observations and improve robustness under challenging conditions such as non-line-of-sight propagation and sparse supervision. At the same time, emerging studies have begun to investigate foundation-model-style solutions for wireless localization. However, existing methods still depend heavily on synthetic data or single-environment training, and they do not fully address anchor-free localization across diverse indoor geometries and heterogeneous measurements.

\section{Anchor-Free UE Localization: System Overview, Challenges, and Problem Formulation}

\label{section:system}

\subsection{Localization System}
\label{section:system1}


To investigate anchor-free localization in large-scale Wi-Fi deployments, we conducted an extensive measurement campaign \cite{zhang2026wiloc} at the University of Southern California (USC) University Park Campus. The campaign spans 16 buildings and 180 corridors, covering a total length of 4,560 m and detecting 3,293 access points (APs). UE locations are sampled at a default spatial resolution of 1.5 mm. On the UE side, a Universal Software Radio Peripheral (USRP) is used to digitize and record signals across the full 100 MHz bandwidth of the 2.4 GHz industrial, scientific, and medical (ISM) band. The APs operate on different channels, and beacon reception is performed using a single antenna, with no transmission from the UE; consequently, multi-antenna effects and AP-side beamforming are not considered. Each measurement is further annotated with 14 metadata attributes to capture relevant system parameters. Additional details can be found in our recent dataset paper \cite{zhang2026wiloc}, and the dataset is publicly available on Wides website.
In this work, we use the CSI measured over 52 subcarriers, denoted by $\mathbf{C}_t \in \mathbb{R}^{52 \times K}$, where $K$ is the number of APs observed at UE location $t$ and may vary across locations. We also use the RSSI vector $\mathbf{r}_t \in \mathbb{R}^{K}$, the signal-to-interference-plus-noise ratio (SINR) vector $\mathbf{s}_t \in \mathbb{R}^{K}$, and the UE location at timestamp $t$, together with noise and interference measurements.
To encode AP availability, we introduce AP presence indicators at both local and global scales. Specifically, $\mathbf{f}_t \in \mathbb{R}^{N_1}$  denotes the AP presence vector within the current building, whereas $\mathbf{g}_t \in \mathbb{R}^{N_2}$ represents AP presence over the union of APs across all buildings. Here, $N_1$ denotes \textit{ the maximum AP dimension used within a building, corresponding to the total number of distinct APs observed across its floors}. These indicators explicitly specify whether each AP is visible at the current UE location and are therefore particularly useful for handling variable AP availability. For instance, two nearby locations may observe 12 and 20 APs, respectively, with partial overlap between their visible AP sets. To obtain a consistent representation, we pad each sample to the fixed dimension $N_1$ and assign zeros to APs that are not observed at the current location. In this way, $\mathbf{f}_t$ and $\mathbf{g}_t$ provide \textit{aligned local and global AP visibility cues while preserving a fixed AP ordering}.

\subsection{Challenges}

\begin{figure}[!t]
    \centering
    \begin{subfigure}{0.241\textwidth}
        \centering
        \includegraphics[width=\linewidth]{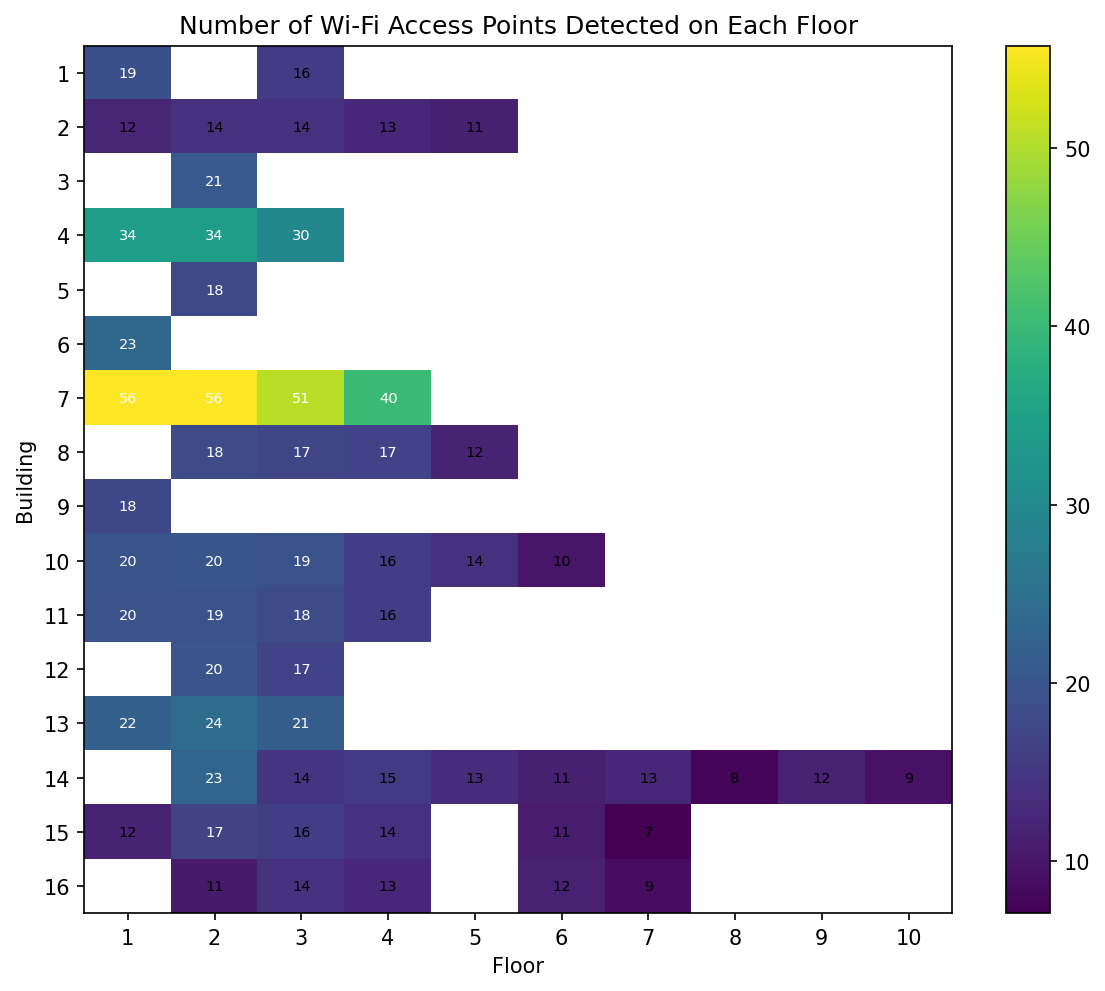}
        \caption{AP Numbers}
        \label{fig:sub1}
    \end{subfigure}
    \begin{subfigure}{0.241\textwidth}
        \centering
        \includegraphics[width=\linewidth]{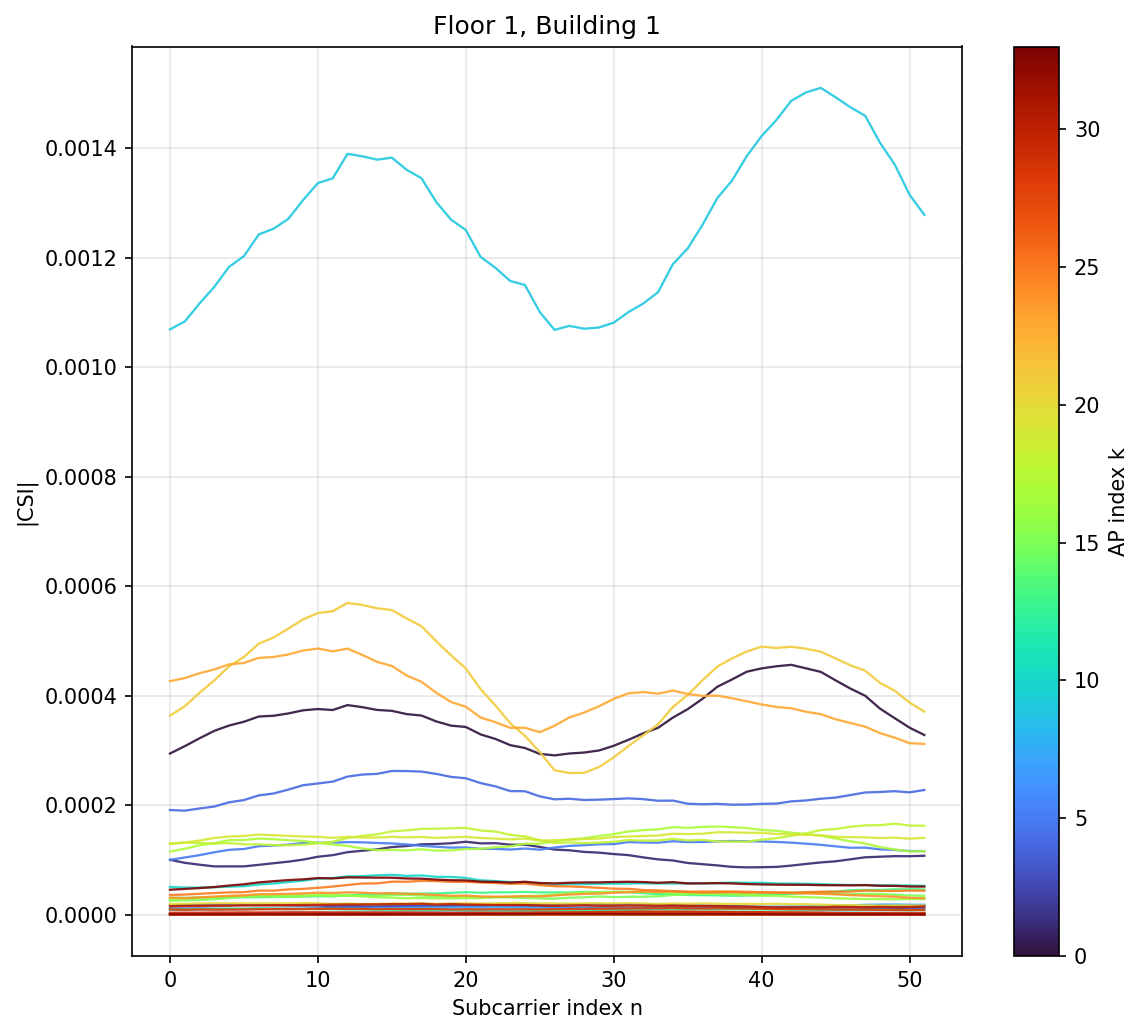}
        \caption{CSI Magnitudes}
        \label{fig:sub2}
    \end{subfigure}
    \caption{Variable AP Length and Power Variations}
    \label{fig:twofigures}
\vspace{-0.4cm}
\end{figure}


In this work, we develop our method based on the following challenges and opportunities. Our extensive measurement campaign spans 16 buildings and collects data from highly diverse environments. Moreover, even within a single environment, such as a floor, the measurements exhibit substantial heterogeneity, including a variable number of observed APs and large variations in received signal strength, as detailed below.

\subsubsection{Diverse Geometry}

Collecting Wi-Fi measurements across 16 geometrically diverse buildings introduces substantial environmental heterogeneity. Differences in floor plans, corridor widths, wall materials, and room density lead to distinct blockage patterns and multipath effects, causing signal observations to vary significantly across buildings. As a result, representations learned in one building may not transfer reliably to another. Similar heterogeneity also exists within a single building, where floor-specific layouts and localized clutter further affect both AP visibility and received signal strength. At the same time, this diversity is a key strength of the dataset: it captures the variability of large-scale campus deployments, enables a rigorous evaluation of robustness and cross-environment generalization, and motivates the design of OmniLoc for accurate and reliable anchor-free indoor localization across diverse indoor settings.

\subsubsection{Variable Length of Observed APs}

Fig. \ref{fig:sub1} reveals a fundamental challenge in Wi-Fi-based localization: the number of APs visible to the UE varies substantially across locations. This leads to variable-length observations and, in turn, a highly irregular input space, since different samples are associated with different sets of detectable APs. As a result, the model cannot assume a fixed or spatially consistent measurement structure. Moreover, AP missingness is often location-dependent rather than random, implying that the visibility pattern itself contains useful spatial cues while also making the learning problem more difficult. Such heterogeneity complicates both feature representation and model generalization, particularly across floors and buildings. This observation motivates the need for a unified input representation, which we detail in Section II. 

\subsubsection{Varying Power of Observed APs}  

The plot of AP power variation (Fig. \ref{fig:sub2}) highlights another central challenge in Wi-Fi localization: even when the same AP is observed, its received signal strength can vary significantly across locations and environments. Such variation arises from distance-dependent path loss, blockage, shadowing, multipath fading, as well as differences in building geometry and material properties. As a result, a fixed physical location may not correspond to a stable power signature, while similar RSSI levels may be observed at distinct locations. This makes power-based features inherently noisy and ambiguous, and thus difficult to use for learning robust location-sensitive representations that generalize across floors and buildings. These observations motivate the design of a robust feature extractor that can inherently account for such variability.

\subsection{Problem Formulation}  

For notational simplicity, we omit the timestamp index $t$ and express the inputs in batched form. The CSI input is denoted by $\mathbf{C} \in \mathbb{R}^{B \times 52 \times N_1}$, the RSSI input by $\mathbf{r} \in \mathbb{R}^{B \times 1 \times N_1}$, and the SINR input by $\mathbf{s} \in \mathbb{R}^{B \times 1 \times N_1}$, where $B$ is the batch size. 

Given the input modalities, including \textit{magnitudes of CSI}\footnote{In large commercial WiFi deployments, we empirically observe that CSI phase measurements are often unstable or effectively random, making them difficult to interpret and prone to introducing substantial errors. Motivated by this observation, we focus on CSI magnitudes
and seek a unified representation that is both robust and expressive.}, RSSI, SINR, and AP presence indicators (denoted by ID1 and ID2), the anchor-free UE localization problem is to learn a neural network that can jointly and reliably represent these measurements and map them to the UE location through a robust nonlinear function. The target output is defined as $(x, y, flr, bld)$, where $(x, y)$ denotes the 2-D coordinates pair; the origin is defined independently in each building, while the X/Y directions are kept consistent across all buildings and the outdoor coordinate system; $flr$ and $bld$ represent the corresponding floor and building indices. The key objective is therefore to learn discriminative representations and an effective optimization strategy that together enable accurate localization across heterogeneous indoor environments.

\begin{figure*}[!t]
	\centering
	\includegraphics[width=0.8\linewidth]{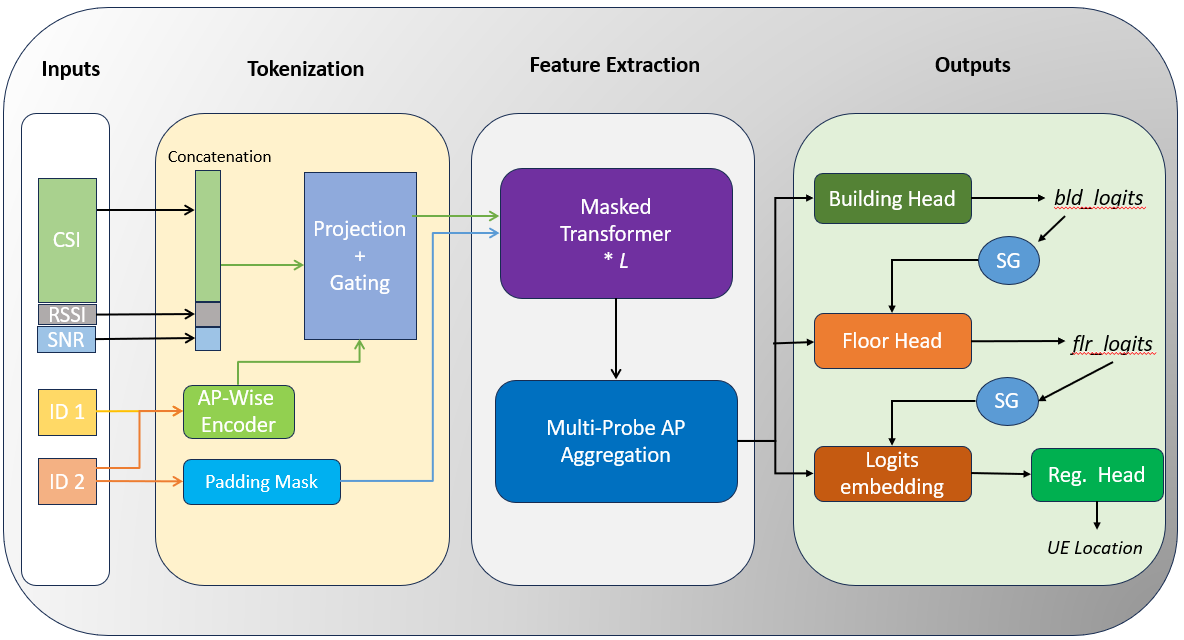} 
	\caption{Overview of our OmniLoc-based Anchor-free UE Localization framework.}
	\label{OmniLoc}
 \vspace{-0.6cm}
\end{figure*}
\section{Proposed Method}

In this section, we present OmniLoc, as illustrated in Fig. \ref{OmniLoc}. OmniLoc consists of three main components: (i) a unified input tokenization module for wireless signal measurements, (ii) a customized Transformer-based feature extractor, and (iii) task-specific output heads.

\subsection{Unified Input Tokenization}

The unified input tokenization (UIT) module takes all available measurements
as input and converts heterogeneous wireless observations, including CSI,
RSSI, SINR, and associated identifiers, into an initial signal representation
through carefully designed embeddings inspired by interpretable sentence
embedding methods~\cite{lin2017structured, reimers2019sentence}.
This design is deliberately simple yet effective: it maps heterogeneous
wireless measurements into a shared representation space, allowing expressive
backbones such as Transformers~\cite{vaswani2017attention} to extract
downstream features more effectively.
The module comprises two stages: input fusion  and
sequence construction.
 
\subsubsection{Input Fusion}

As described in Sec.~\ref{section:system1}, our model takes five types of inputs: CSI, RSSI, SINR, and two AP presence indicators, ${\bf f}$ and ${\bf g}$, which are denoted as ID1 and ID2 in Fig.~\ref{OmniLoc}.
To preserve the AP-wise structure of the measurements, we first concatenate
the magnitudes of CSI ($\mathbf{C} \in \mathbb{R}^{B \times N_1 \times 52}$),
RSSI ($\mathbf{r} \in \mathbb{R}^{B \times N_1 \times 1}$), and SINR
($\mathbf{s} \in \mathbb{R}^{B \times N_1 \times 1}$) along the feature axis
for each AP:
\begin{equation}
  \mathbf{F}
  = \bigl[\,\mathbf{C} \mid \mathbf{r} \mid \mathbf{s}\,\bigr]
  \;\in\; \mathbb{R}^{B \times N_1 \times 54},
\end{equation}
where $\mid$ denotes concatenation along the feature dimension, and the
per-AP slice $\mathbf{F}_{b,i,:} \in \mathbb{R}^{54}$ collects all
measurements for AP~$i$ in sample~$b$.
This layout preserves per-AP information while retaining CSI, RSSI, and
SINR as distinct feature components.
These measurements are complementary: CSI characterizes the multipath
frequency response of the channel over 52 subcarriers, while RSSI and SINR
provide scalar summaries of signal strength and link quality.
By grouping them into a single per-AP feature vector, the projection layer
can learn an effective fusion of heterogeneous wireless measurements \footnote{We note that SINR can be regarded as an attribute reflecting the propagation conditions captured by the CSI. Since CSI is inherently time-varying, its behavior is independent of whether an explicit indicator such as SINR is included. SINR may also be interpreted as aggregated information, for example representing AP-wise variance, and therefore its incorporation does not introduce any fundamental concerns or materially affect the learning process. Its inclusion is consistent with the role of learning in our framework, and we provide extensive case studies, including ablation analyses comparing settings with and without SINR. Furthermore, SINR is not utilized in the pre-trained models designed for cross-dataset or cross-environment generalization. Finally, our primary objective is to demonstrate the advantages of large and diverse datasets, while SINR is employed only within the supervised learning setting.}.


Each per-AP feature vector is then projected to the model dimension $d$:
\begin{equation}
  \mathbf{Z}_{b,i}
  = W_f\,\mathbf{F}_{b,i,:} + n_f
  \;\in\; \mathbb{R}^{d},
\end{equation}
where $W_f \in \mathbb{R}^{d \times 54}$ and $n_f \in \mathbb{R}^{d}$. This linear layer provides a learned, task-specific embedding that maps
the raw measurements into the transformer's latent space.

We further modulate each projected AP token using its corresponding
presence flag ${\bf f}_{b,i} \in \mathbb{R}$:
\begin{equation}
  \mathbf{T}_{b,i}
  = \mathbf{Z}_{b,i} \;\odot\; {\bf f}_{b,i}
  \;\in\; \mathbb{R}^{d},
  \quad i = 1,\ldots,N_1,
\end{equation}
where $\odot$ denotes
element-wise multiplication.
This soft gating retains informative tokens for present APs while
suppressing absent ones without forcing them to exactly zero, thereby
preserving gradient propagation during training.
Specifically, this ensures the projection weights $W_f$ continue to
receive gradient updates from all AP positions, including absent-AP slots,
rather than being blocked by a hard zero as in a naive hard-multiply
formulation.

\subsubsection{Sequence Construction}

The global AP signal vector $\mathbf{g} \in \mathbb{R}^{B \times N_2}$
captures a coarse fingerprint of the entire AP database, independent of
the subset of APs that are locally visible.
To preserve per-AP identity, we encode each scalar entry ${\bf g}_{b,j}$
independently using a shared projection:
\begin{equation}
  e_{b,j}
  = \mathrm{ReLU}\!\left(W_e\,{\bf g}_{b,j} + b_e\right)
  \;\in\; \mathbb{R}^{d},
  \quad j = 1,\ldots,N_2,
\end{equation}
where $W_e \in \mathbb{R}^{d \times 1}$ and $b_e \in \mathbb{R}^{d}$.
Unlike a single $\mathbb{R}^{N_2 \to d}$ projection, this shared
per-element encoder applies the same transformation to every AP entry,
reducing the tendency to overfit to AP index positions.

We then aggregate the $N_2$ encoded vectors by mean pooling:
 
\begin{equation}
  \bar{\mathbf{g}}_b
  = \frac{1}{N_2} \sum_{j=1}^{N_2} e_{b,j}
  \;\in\; \mathbb{R}^{d}.
\end{equation}
 
This yields a permutation-invariant summary of the global AP database,
which is appropriate because $\mathbf{a}_2$ has no natural ordering. To distinguish this summary from the AP tokens in the subsequent sequence, we add a learned positional embedding to form a dedicated GLOBAL token:
\begin{equation}
  \mathbf{g}_b = \bar{\mathbf{g}}_b + \mathbf{p}_0
  \;\in\; \mathbb{R}^{d},
\end{equation}
where $\mathbf{p}_0 \in \mathbb{R}^{d}$ is applied only to the GLOBAL
token at index~0.
AP tokens receive no positional encoding because their ordering within the
sequence is arbitrary; assigning fixed position indices to unordered APs
would introduce spurious structure that could mislead the transformer's
attention mechanism. Finally, we concatenate the GLOBAL token with the $N_1$ gated AP tokens
along the sequence dimension to form the transformer input:
\begin{equation}
\mathbf{X}
=
\Bigl[
\underbrace{\mathbf{g}_b}_{\mathrm{GLOBAL}}
\;\Big|\;
\underbrace{\mathbf{T}_{b,1}\;\cdots\;\mathbf{T}_{b,N_1}}_{\mathrm{AP\ tokens}}
\Bigr]
\in \mathbb{R}^{B \times (1 + N_1) \times d}.
\end{equation}

\subsection{Feature extraction}

Built on top of the tokenization module, the customized Transformer serves as the feature extraction backbone. It captures interactions among heterogeneous wireless signal components and their coupling with geometric information, enabling the network to learn environment-aware representations directly from data.

\subsubsection{Geometry aware Transformer} 

We feed the resulting sequence into a stack of $L$ standard transformer
encoder layers, each comprising multi-head self-attention and a position-wise
feed-forward network, with residual connections and layer normalization. The key-padding mask (KPM), defined as $\mathrm{KPM} \in {0,1}^{B \times L}$, is derived from $\mathbf{f}$ by concatenating a degenerate no-padding column for GLOBAL with the indicator of absent AP slots (ID1). It is applied at every attention layer to exclude padded AP slots from attention computation. Formally, 
\begin{equation}
  \mathbf{X}_{\mathrm{enc}}
  = \mathrm{TransformerEncoder}_{\ell}\!\left(
      \mathbf{X},\;
       \mathrm{KPM}
    \right)
  \;\in\; \mathbb{R}^{B \times L \times d}.
\end{equation}
The encoder preserves the sequence length and hidden dimension, yielding one
$d$-dimensional output vector per token. With masking, attention is restricted to valid AP tokens while padded positions are excluded from participation. This allows the model to focus its capacity on meaningful AP representations without being affected by artificial padding.

\subsubsection{Multi-Probe AP Aggregation}

We consider three downstream tasks—building classification, floor classification, and coordinate regression—which require a fixed-dimensional representation. To this end, we propose Multi-Probe AP Aggregation (MPAA), which compresses the $L$ encoder tokens into a single vector $\mathbf{h}$ via learned multi-head attention pooling with $M$ heads. Unlike mean pooling \cite{lee2019self, gholamalinezhad2020pooling}, which assigns uniform weights to all tokens, MPAA enables each head to learn task-adaptive soft weights over the sequence, selectively emphasizing informative tokens for localization.

We begin by computing the pooling logits, obtained by linearly projecting each token representation into 
$M$ scalars, one for each attention head:
\begin{equation}
  \mathbf{U}_{b}
  = \mathbf{X}_{\mathrm{enc},b}\; W_{\mathrm{pool}}
  \;\in\; \mathbb{R}^{L \times M},
\end{equation}
where $W_{\mathrm{pool}} \in \mathbb{R}^{d \times M}$. Applied to the full
batch, this gives
$\mathbf{U}
  = \mathbf{X}_{\mathrm{enc}}\; W_{\mathrm{pool}}
  \;\in\; \mathbb{R}^{B \times L \times M}.$
Thus, each token receives one pooling logit per head. Compared with query-based cross-attention, this formulation is more lightweight, introducing only the projection matrix $W_{\mathrm{pool}}$ and no separate learned query. We then apply softmax over the sequence dimension $L$ independently for each
head, rather than across heads: 
\begin{equation}
  \mathbf{W}_{b,\,:,\,m} \in\; \mathbb{R}^{L}
  = \mathrm{softmax}_{L}\!\left(\mathbf{U}_{b,\,:,\,m}\right),
   \sum_{l=1}^{L} W_{b,l,m} = 1,
\end{equation}
Thus, each head defines a probability distribution over the $L$ tokens. This
allows every head to perform a differentiable soft selection, ranging from
near one-hot focus on a single token to a more diffuse weighting over multiple
tokens, including the uniform weighting of mean pooling. Each pooling head $m$ computes a weighted combination of the encoded token
representations:
\begin{equation}
  \mathbf{S}_{b,m}
  = \sum_{l=1}^{L} W_{b,l,m}\;\mathbf{X}_{\mathrm{enc},b,l,:}
  \;\in\; \mathbb{R}^{d}.
\end{equation}
This yields one $d$-dimensional summary vector per head and per sample. Stacking
the outputs of all $M$ heads gives $\mathbf{S} \in\; \mathbb{R}^{B \times M \times d}$.
The contraction is performed over the sequence dimension $L$, so each head
aggregates information from the full token set using its learned attention
weights. Different heads can therefore specialize to different aspects of the
environment, such as global database context or salient locally observed APs. 
 
Finally, we concatenate the $M$ head-specific summaries and project the
result back to the model dimension:
\begin{equation}
  \mathbf{h}
  = \mathbf{S} W_{\mathrm{fuse}}^{\top} + b_{\mathrm{fuse}}
  \in \mathbb{R}^{B \times d},
\end{equation}
where $W_{\mathrm{fuse}} \in \mathbb{R}^{d \times Md}$ and
$b_{\mathrm{fuse}} \in \mathbb{R}^{d}$. This fusion step allows the model to
adaptively combine the complementary information captured by different heads
into a single pooled representation.

We note that, in indoor localization, a single strong nearby AP often dominates the location fingerprint. Simple mean pooling can obscure this effect by averaging the dominant AP with weaker or less informative ones. In contrast, MPAA allows one probe to focus on the dominant AP(s) while another aggregates complementary evidence from supporting APs, yielding a more expressive representation. We further provide a theoretical analysis of the proposed MPAA to justify its strong empirical performance. For brevity, the detailed analysis is deferred to Appendix~\ref{sec:appendix}.

\subsection{Geometry-aware Outputs}

Finally, the output heads jointly predict the building index, floor index, and
UE coordinates, yielding a hierarchical localization process: the model first
identifies the building, then refines the prediction to the floor, and finally
estimates the UE coordinates.
To improve coordinate estimation, we introduce a geometry-aware localization
(GLO) module that conditions regression on learned geometric embeddings derived
from the classification outputs.
Specifically, GLO projects the building and floor logits into the model
dimension $d$, fuses them with the aggregated feature $\mathbf{h}$, and feeds
the combined representation to a regression head.
By incorporating discrete spatial context (building and floor identity) as
continuous conditioning signals, the module promotes spatial consistency and
improves the robustness of UE localization.

\subsubsection{Outputs}

\noindent{\bf Building Index.}
Given the pooled representation $\mathbf{h} \in \mathbb{R}^{B \times d}$, we
predict building labels using a two-layer MLP:
\begin{align}
  \mathbf{h}^{(1)}_{\mathrm{bld}}
    &= \mathrm{ReLU}\!\left(
        W_{\mathrm{bld},1}\,\mathbf{h}^{\top} + b_{\mathrm{bld},1}
       \right)
    \;\in\; \mathbb{R}^{d_{\mathrm{ff}} \times B}, \\
  \hat{\mathbf{y}}_{\mathrm{bld}}
    &= W_{\mathrm{bld},2}\,\mathbf{h}^{(1)}_{\mathrm{bld}} + b_{\mathrm{bld},2}
    \;\in\; \mathbb{R}^{C_{\mathrm{bld}} \times B},
\end{align}
where $W_{\mathrm{bld},1} \in \mathbb{R}^{d_{\mathrm{ff}} \times d}$,
$b_{\mathrm{bld},1} \in \mathbb{R}^{d_{\mathrm{ff}}}$,
$W_{\mathrm{bld},2} \in \mathbb{R}^{C_{\mathrm{bld}} \times d_{\mathrm{ff}}}$,
and $b_{\mathrm{bld},2} \in \mathbb{R}^{C_{\mathrm{bld}}}$.
Equivalently, in batch-first form,
$\hat{\mathbf{y}}_{\mathrm{bld}} \in \mathbb{R}^{B \times C_{\mathrm{bld}}}$.
We use a two-layer Multi-Layer Perceptron (MLP) rather than a single linear layer to provide sufficient
nonlinear capacity while keeping the classification head lightweight relative
to the transformer encoder.

\noindent{\bf Floor Index.} Floor identity is building-specific, e.g., floor~3 in building~A is not comparable to floor~3 in building~B. Accordingly, we condition floor prediction on the building prediction. To decouple the two objectives, we apply stop-gradient to $ \hat{\mathbf{y}}_{\mathrm{bld}} $ before feeding it to the floor head. Without this design, the floor loss would back-propagate into the building classifier through an additional path, potentially disrupting optimization of the building cross-entropy loss. The augmented floor input is:
\begin{equation}
  \mathbf{h}_{\mathrm{flr}}
  = \bigl[\,\mathbf{h}\;\|\;
    \mathrm{sg}\!\left(\hat{\mathbf{y}}_{\mathrm{bld}}\right)\bigr]
  \;\in\; \mathbb{R}^{B \times (d + C_{\mathrm{bld}})},
\end{equation}
where $\mathrm{sg}(\cdot)$ denotes stop-gradient (\texttt{.detach()} in
PyTorch) and $\|$ denotes concatenation along the feature dimension.
The floor head maps this augmented input to floor logits:
\begin{align}
  \mathbf{h}^{(1)}_{\mathrm{flr}}
    &= \mathrm{ReLU}\!\left(
        W_{\mathrm{flr},1}\,\mathbf{h}_{\mathrm{flr}}^{\top}
        + b_{\mathrm{flr},1}
       \right)
    \;\in\; \mathbb{R}^{d_{\mathrm{ff}} \times B}, \\
  \hat{\mathbf{y}}_{\mathrm{flr}}
    &= W_{\mathrm{flr},2}\,\mathbf{h}^{(1)}_{\mathrm{flr}} + b_{\mathrm{flr},2}
    \;\in\; \mathbb{R}^{C_{\mathrm{flr}} \times B},
\end{align}
where $W_{\mathrm{flr},1} \in \mathbb{R}^{d_{\mathrm{ff}} \times (d+C_{\mathrm{bld}})}$,
$b_{\mathrm{flr},1} \in \mathbb{R}^{d_{\mathrm{ff}}}$,
$W_{\mathrm{flr},2} \in \mathbb{R}^{C_{\mathrm{flr}} \times d_{\mathrm{ff}}}$,
$b_{\mathrm{flr},2} \in \mathbb{R}^{C_{\mathrm{flr}}}$.
Equivalently, in batch-first form,
$\hat{\mathbf{y}}_{\mathrm{flr}} \in \mathbb{R}^{B \times C_{\mathrm{flr}}}$.
The stop-gradient ensures that the floor loss updates only the floor head, while the building classifier remains supervised solely by its own cross-entropy objective. This avoids conflicting gradient signals, yet still provides the floor head with explicit building-level context.

\noindent{\bf Geometry-Aware UE Coordinate Estimation.} The regression head predicts 2-D coordinates conditioned on both classification outputs. To make this conditioning differentiable, we project the building and floor logits into the model dimension $d$ using two-layer MLPs. Both logit vectors are detached before projection, preventing the regression loss from back-propagating into the classification heads through this conditioning path: 
\begin{align}
  \mathbf{e}_{\mathrm{bld}}
    &= W_{\mathrm{eb},2}\;\mathrm{ReLU}\!\left(
        W_{\mathrm{eb},1}\;\mathrm{sg}\!\left(\hat{\mathbf{y}}_{\mathrm{bld}}\right)
        + b_{\mathrm{eb},1}
       \right) + b_{\mathrm{eb},2}
    \;, \\ 
  \mathbf{e}_{\mathrm{flr}}
    &= W_{\mathrm{ef},2}\;\mathrm{ReLU}\!\left(
        W_{\mathrm{ef},1}\;\mathrm{sg}\!\left(\hat{\mathbf{y}}_{\mathrm{flr}}\right)
        + b_{\mathrm{ef},1}
       \right) + b_{\mathrm{ef},2}
    \;,
\end{align}
where $W_{\mathrm{eb},1} \in \mathbb{R}^{d_{\mathrm{ff}} \times C_{\mathrm{bld}}}$,
$W_{\mathrm{eb},2} \in \mathbb{R}^{d \times d_{\mathrm{ff}}}$, and analogously
for the floor embedding.
Conditioning regression on these embeddings lets the model leverage the discrete structure of indoor localization. Because coordinates are only interpretable within a specific building and floor, the resulting building- and floor-aware context vectors steer the regression head toward geometrically consistent predictions.

The pooled representation and the two geometry embeddings are then
concatenated along the feature dimension:
\begin{equation}
  \mathbf{u}
  = \bigl[\,\mathbf{h}\; \mid \mathbf{e}_{\mathrm{bld}}\; \mid 
    \mathbf{e}_{\mathrm{flr}}\,\bigr]
  \;\in\; \mathbb{R}^{B \times 3d},
\end{equation}
and passed through a two-layer regression MLP to produce the predicted 2-D
coordinates:
\begin{align}
  \mathbf{u}^{(1)}
    &= \mathrm{ReLU}\!\left(
        W_{\mathrm{reg},1}\,\mathbf{u}^{\top} + b_{\mathrm{reg},1}
       \right)
    \;\in\; \mathbb{R}^{d_{\mathrm{ff}} \times B}, \\
  \hat{\mathbf{p}}
    &= W_{\mathrm{reg},2}\,\mathbf{u}^{(1)} + b_{\mathrm{reg},2}
    \;\in\; \mathbb{R}^{2 \times B},
\end{align}
where $W_{\mathrm{reg},1} \in \mathbb{R}^{d_{\mathrm{ff}} \times 3d}$,
$b_{\mathrm{reg},1} \in \mathbb{R}^{d_{\mathrm{ff}}}$,
$W_{\mathrm{reg},2} \in \mathbb{R}^{2 \times d_{\mathrm{ff}}}$,
$b_{\mathrm{reg},2} \in \mathbb{R}^{2}$.
Equivalently, in batch-first form,
$\hat{\mathbf{p}}_b \in \mathbb{R}^2$ is the predicted $(x, y)$ coordinate
for sample $b$.
Concatenating $\mathbf{h}$ alongside the geometry embeddings ensures the
regression head retains direct access to the full encoded scene
representation, rather than depending solely on the (potentially noisy,
early-training) classification logits.

\subsection{Loss Functions and Optimization}

The model is trained end-to-end by minimizing a weighted sum of three losses:
\begin{equation}
\label{loss_func}
  \mathcal{L}
  = \lambda_{\mathrm{bld}}\,\mathcal{L}_{\mathrm{bld}}
  + \lambda_{\mathrm{flr}}\,\mathcal{L}_{\mathrm{flr}}
  + \lambda_{\mathrm{reg}}\,\mathcal{L}_{\mathrm{reg}},
\end{equation}
where $\lambda_{\mathrm{bld}}, \lambda_{\mathrm{flr}}, \lambda_{\mathrm{reg}} > 0$
are task weights.
Both classification tasks use batch-averaged cross-entropy (CE):
\begin{equation}
\small
\mathcal{L}_{\mathrm{bld}}
= \frac{1}{B}\sum_{b=1}^{B}
\mathrm{CE}\!\left(\hat{\mathbf{y}}_{\mathrm{bld},b},\, y^*_{\mathrm{bld},b}\right),
\quad
\mathcal{L}_{\mathrm{flr}}
= \frac{1}{B}\sum_{b=1}^{B}
\mathrm{CE}\!\left(\hat{\mathbf{y}}_{\mathrm{flr},b},\, y^*_{\mathrm{flr},b}\right),
\end{equation} 
where $y^*_{\mathrm{bld},b}$ and $y^*_{\mathrm{flr},b}$ are the ground-truth
building and floor labels for sample $b$.
Lastly, UE's coordinate regression uses mean squared error:
\begin{equation}
  \mathcal{L}_{\mathrm{reg}}
  = \frac{1}{B}\sum_{b=1}^{B}
    \left\|\hat{\mathbf{p}}_b - \mathbf{p}_b\right\|_2^2,
\end{equation}
where $\mathbf{p}_b = (x_b, y_b) \in \mathbb{R}^2$ is the ground-truth
coordinate. 

{\bf Optimization:} We implement the proposed model in PyTorch and train it end-to-end on an NVIDIA GeForce RTX 5090 GPU with mini-batch optimization. Unless stated otherwise, all learnable parameters are randomly initialized and jointly optimized using AdamW. Gradients are back-propagated through the full network, except along explicitly detached paths used to decouple the classification and regression objectives. During training, the Transformer backbone, aggregation module, and prediction heads are optimized jointly under the overall objective.

\section{Case Studies}

We validate OmniLoc\footnote{The code and pretrained model are publicly available \url{https://github.com/Leo-Chu/OmniLoc}} as a competitive universal localization framework by addressing the following questions:
\begin{itemize}
    \item {\bf Q1}: How does OmniLoc compare to the  State of the art (SOTA) localization in our large and diverse dataset?
    \item {\bf Q2}: Do the proposed design components also benefit other localization methods?
    \item {\bf Q3}: Can OmniLoc generalize effectively to unseen environments and adapt flexibly to other datasets in practice?
\end{itemize}

\subsection{Datasets and Performance Metrics}

For evaluation, we mainly use the WiLoc dataset. More details on WiLoc are provided in our recent dataset paper \cite{zhang2026wiloc}, and the dataset is publicly available. In addition, we adopt the WILD dataset from Kaggle
 for cross-environment testing.

\subsubsection{Compared Methods}

Many indoor localization methods have been proposed in the literature. For a fair comparison, we focus only on representative approaches with publicly available code. We adopt their network architectures directly from the official implementations, making only minor modifications where necessary to accommodate our input size. 

\begin{itemize}
    \item {\bf CNN\footnote{\url{https://github.com/sibrendebast/MaMIMO-CSI-positioning-using-CNNs}}:}  A CNN-based indoor localization method \cite{de2020csi} with 13 convolutional layers, similar to the architectures adopted in \cite{hejazi2021dyloc, chu2024exploiting}.  
    \item  {\bf BERT\footnote{ \url{https://github.com/RS2002/CSI-BERT?tab=readme-ov-file}}}: A method employs a BERT-based network \cite{zhao2024finding} to extract informative features from wireless signals for wireless sensing tasks.  
    \item  {\bf LWM\footnote{\url{https://github.com/guangjinpan/LWLM}}:} A large-scale wireless model \cite{pan2025large} is employed for indoor localization based on DeepMIMO dataset.     
 
\end{itemize}

\subsubsection{Performance Metrics}

To evaluate positioning accuracy, we adopt four complementary metrics. For each test sample $i$, we first define the localization error as the Euclidean distance between the estimated position $\hat{\mathbf{p}}_i$ and the ground-truth position $\mathbf{p}_i$:
\begin{equation}
e_i = \left\| \hat{\mathbf{p}}_i - \mathbf{p}_i \right\|_2, \quad i=1,\dots,N.
\end{equation}
Based on $e_i$, the \textit{Mean Localization Error (MLE)} and \textit{Root Mean Squared Error (RMSE)} are defined as
\begin{equation}
\text{MLE} = \frac{1}{N}\sum_{i=1}^{N} e_i,
\qquad
\text{RMSE} = \sqrt{\frac{1}{N}\sum_{i=1}^{N} e_i^2 }.
\end{equation}
Compared with MLE, RMSE penalizes large localization errors more heavily and is therefore more sensitive to outliers. We also report the \textit{Cumulative Distribution Function (CDF)} of the localization error, which characterizes the fraction of test samples whose error is below a given threshold. Based on the CDF, we further use the \textit{90th Percentile Error (P90)}, defined as the error threshold below which 90\% of the localization errors fall.
Together, these metrics provide a comprehensive evaluation of both the central tendency and the tail behavior of the localization error distribution.

\begin{figure}[!t]
	\centering
	\includegraphics[width=0.8\linewidth]{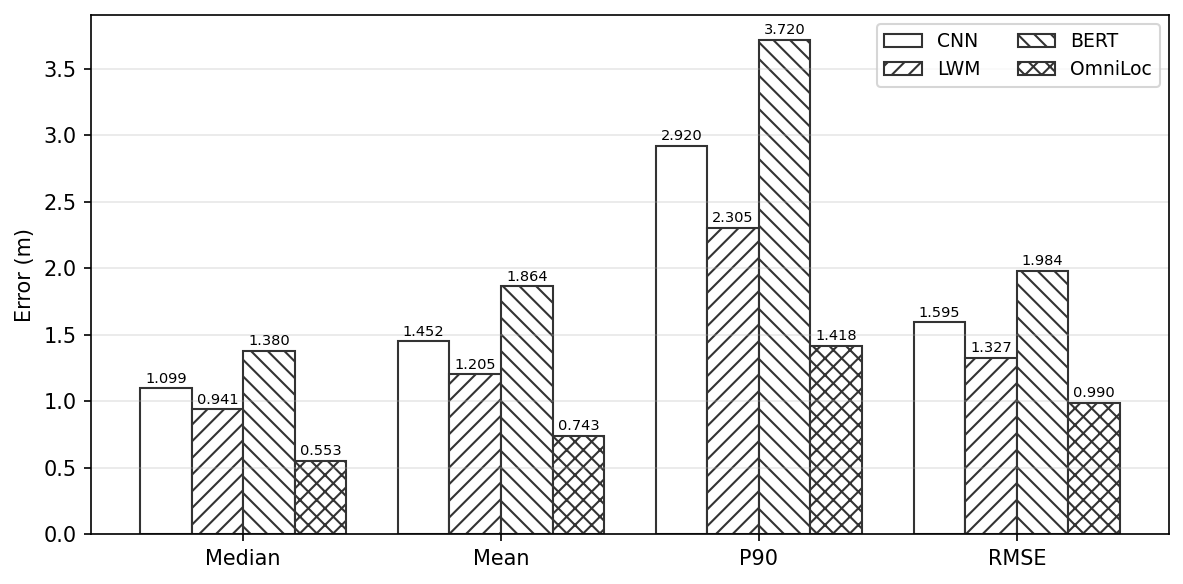} 
	\caption{Qualitative comparisons with SOTA methods.}
	\label{method_comparison_all}
 \vspace{-0.5cm}
\end{figure}
\begin{figure}[!t]
	\centering
	\includegraphics[width=0.9\linewidth]{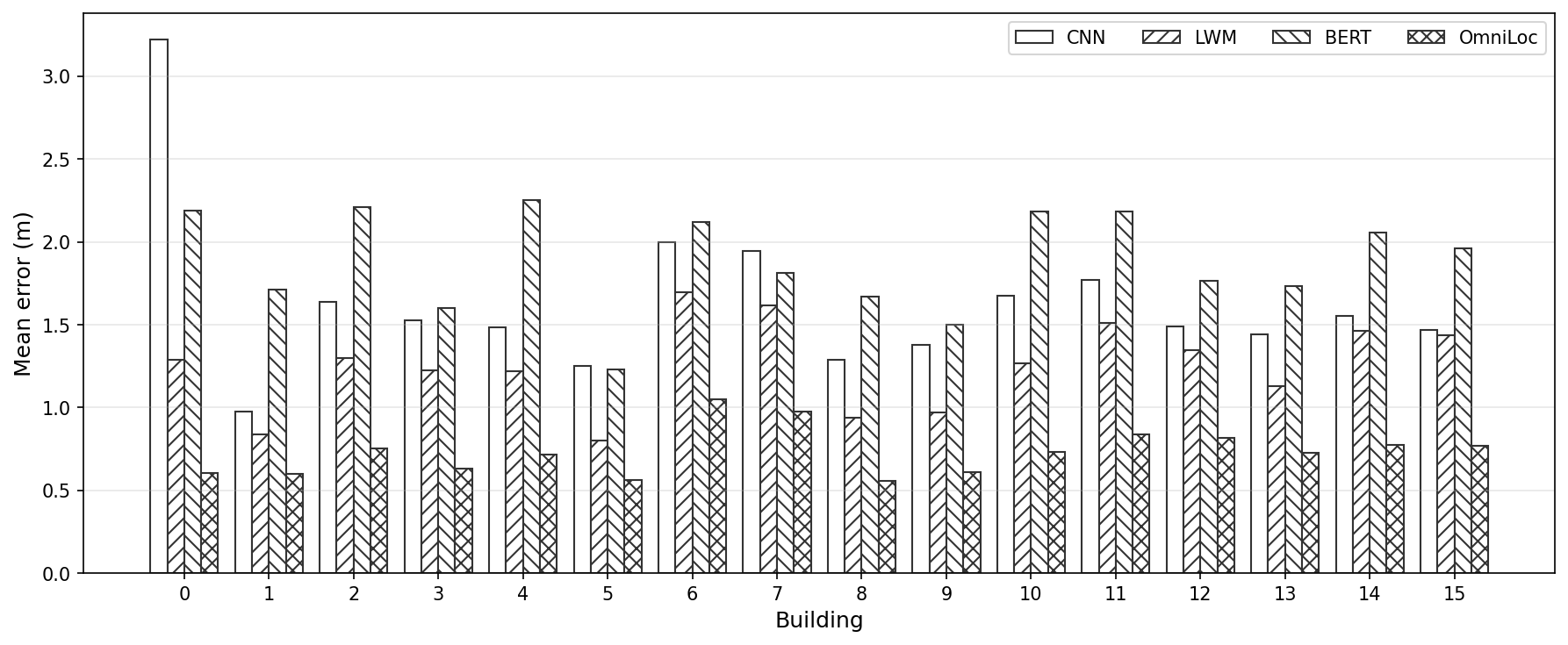} 
	\caption{Building-wise qualitative results. }
	\label{building_wise_plot_no_var}
 \vspace{-0.5cm}
\end{figure}
\begin{figure}[!t]
	\centering
	\includegraphics[width=0.9\linewidth]{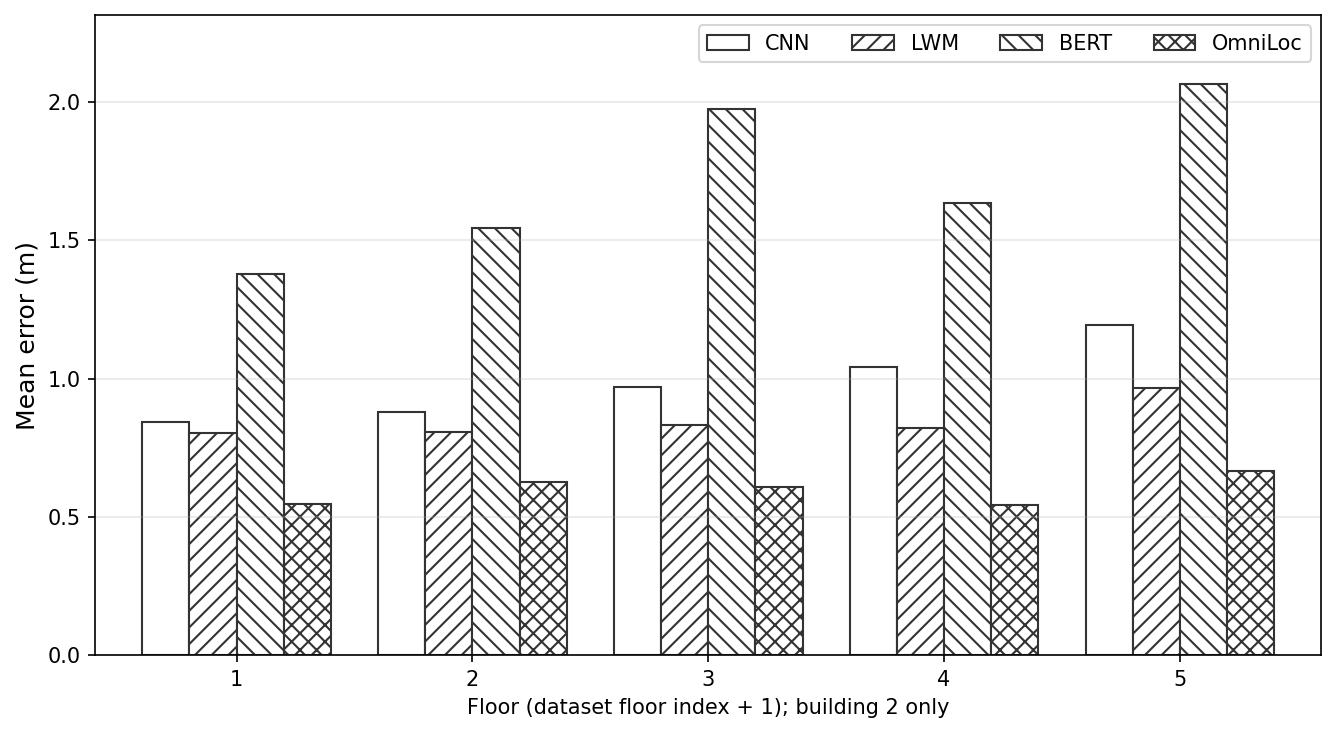} 
	\caption{Floor-wise qualitative results. }
	\label{floor_wise_building1_plot}
 \vspace{-0.9cm}
\end{figure}
\subsubsection{Note on Neural Network Training}

We consider two UE localization setups: {\bf same-environment and cross-environment}. In the same-environment setting, we follow a standard supervised learning protocol, where the data organized in a floor-wise manner and then are randomly split into training, validation, and test sets with a ratio of 7:1:2. In this supervised learning setup, we employ floor-wise stratified splitting. However, due to the high density of the fingerprinting ($< 0.01$ m spacing) and the Wi-Fi beacon interval of $102.4$ ms, residual spatial and temporal correlations likely persist between the training and testing sets, yielding \textit{optimistic} performance.

For cross-environment evaluation, we consider three challenging settings: \textit{Scenario I: leave-one-floor-out}, \textit{Scenario II:  leave-one-building-out}, and \textit{Scenario III:  cross-dataset evaluation}. In \textit{Scenario I}, the training and validation sets are constructed using all samples except those collected on the target floor, and the held-out floor is used exclusively for testing. Unless otherwise specified, we employ the Floor 1 in the Building 1 as the target floor. In \textit{Scenario II}, the training and validation sets are formed using all samples except those collected in the target building, and the held-out building is reserved for testing. Unless otherwise specified, we choose Building 1 (B1) and Building 13 (B13) as the target buildings. In \textit{Scenario III}, we evaluate cross-dataset generalization on the widely used WILD dataset, which contains both RSSI and CSI measurements. In all cross-dataset setups, the neural network training is first conducted using the loss function defined in \eqref{loss_func}. To further study cross-environment adaptation, we investigate parameter-efficient fine-tuning (PEFT) strategies\cite{hu2022lora}, including Low-Rank Adaptation (LoRA) and linear probing (LP). Specifically, LoRA updates only a subset of model parameters, including the query and value projection matrices in the Transformer as well as the regression head, whereas LP fine-tunes only the regression head. It is noted here that we use the zero-shot result as a lower bound and the fully supervised result as an upper bound. Here, \textit{zero-shot} refers to directly applying the pretrained model to data from a new environment without any adaptation, whereas the \textit{fully supervised upper bound} is obtained by training, validation, and testing on data collected from that target environment.


\subsection{Qualitative Comparisons} 

To address {\bf Q1}, we conduct a case study under  the \textit{same-geometry setting} and evaluate localization performance using the four metrics described above. We compare OmniLoc against representative state-of-the-art baselines, including CNN, LVM, and BERT. As shown in Fig. \ref{method_comparison_all}, OmniLoc achieves the lowest error across all four metrics, demonstrating consistently superior localization accuracy. Specifically, OmniLoc attains a median error of 0.553 m, a mean error of 0.743 m, a P90 error of 1.418 m, and an RMSE of 0.990 m, outperforming all competing methods by clear margins. The gains are particularly pronounced on tail-error metrics such as P90 and RMSE, indicating that OmniLoc not only improves average accuracy but also provides stronger robustness in challenging cases. These results validate the effectiveness of OmniLoc and highlight the benefits of its customized input design and network architecture for reliable and precise indoor localization across diverse environments.

We next examine localization performance at a finer spatial granularity by analyzing results at both the building and floor levels. Figures \ref{building_wise_plot_no_var} and \ref{floor_wise_building1_plot} present the building-wise and floor-wise performance of CNN, LVM, BERT, and OmniLoc. While all methods exhibit cross-building variation—reflecting differences in geometry, layout, and propagation conditions—OmniLoc consistently achieves the lowest error in every building. LVM consistently ranks as the second most competitive method in several cases and occasionally outperforms CNN and BERT, indicating its ability to capture certain signal structures under specific deployment conditions. However, its performance is less stable across environments. This variability likely stems from a mismatch between model inductive biases and wireless data characteristics: BERT is primarily optimized for classification tasks, whereas CNNs rely on spatial priors better suited to image-like inputs than to heterogeneous wireless measurements. Overall, the baseline methods remain sensitive to variations in building geometry, AP distribution, and propagation environments, whereas OmniLoc demonstrates stronger robustness across diverse settings due to its geometry-aware design. Fig. \ref{floor_wise_building1_plot} further reports floor-wise MLE across all five floors in Building 1. 


\subsection{Generalization}

\subsubsection{Generalizability of the Proposed Design Components} 

\begin{figure}[!t]
	\centering
	\includegraphics[width=0.8\linewidth]{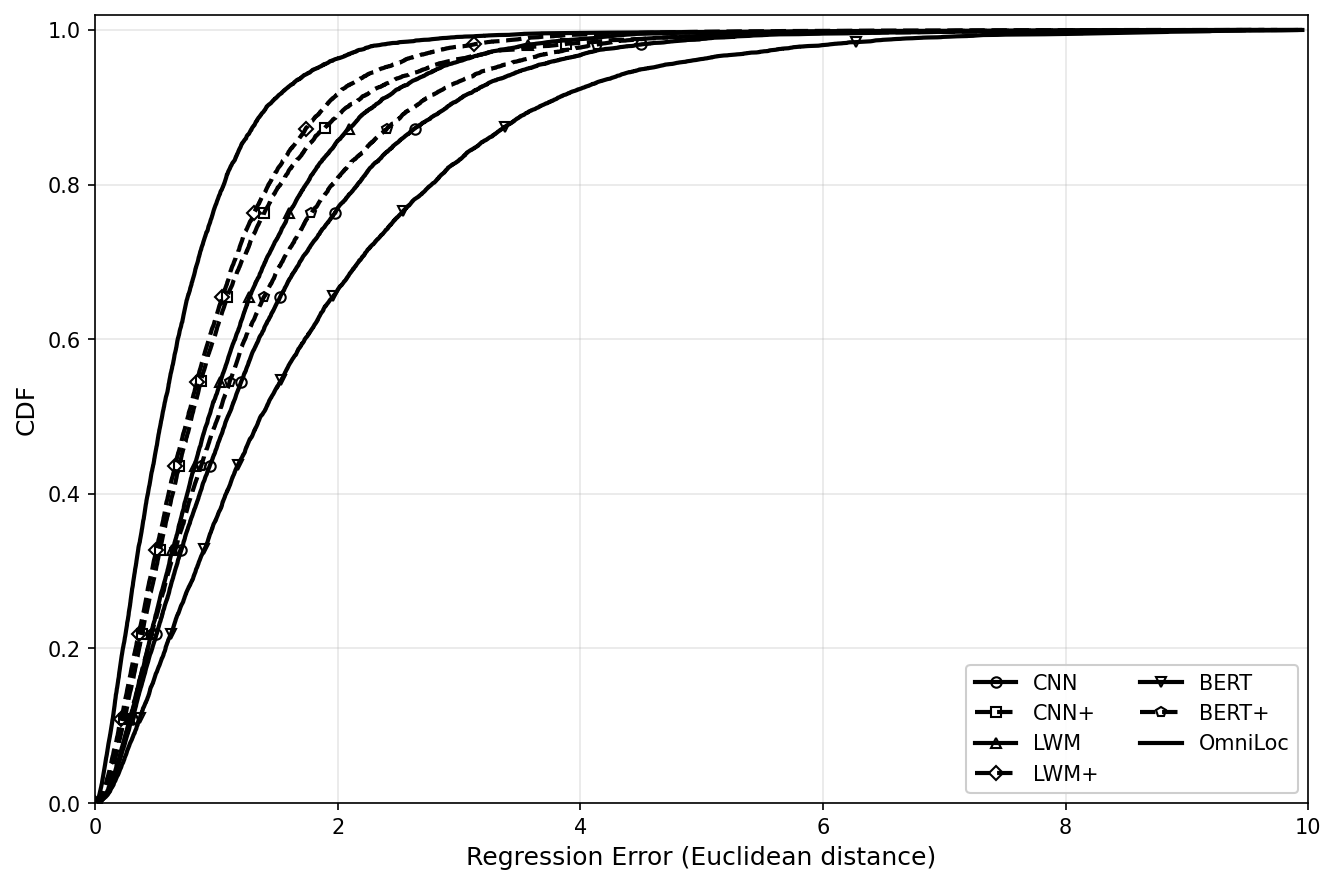} 
	\caption{CDFs of localization errors for all compared methods.}
	\label{cdf_regression_error_case_I}
 \vspace{-0.6cm}
\end{figure}
Fig. \ref{cdf_regression_error_case_I} plots the CDFs of localization error for all methods, including the original baselines and their enhanced “+” variants, which retain the original backbone while incorporating OmniLoc’s design components (unified tokenization, customized feature extraction, and hierarchical prediction). It is shown in Fig. \ref{cdf_regression_error_case_I}  that the enhanced variants consistently shift the corresponding baseline curves upward and leftward, showing that our design yields systematic gains across different backbones. This improvement is especially evident for CNN+, LVM+, and BERT+, each of which outperforms its original version, confirming the general effectiveness and compatibility of the proposed design. Still, OmniLoc achieves the best overall performance, suggesting that fully integrating all proposed design components is more effective than incrementally enhancing existing backbones. These results demonstrate both the superiority of OmniLoc and the transferability of its design principles to other localization models, thereby answering {\bf Q2}.

\subsubsection{Cross-Environment Generalization}  

\begin{figure}[!t]
\centering
\includegraphics[width=0.8\linewidth]{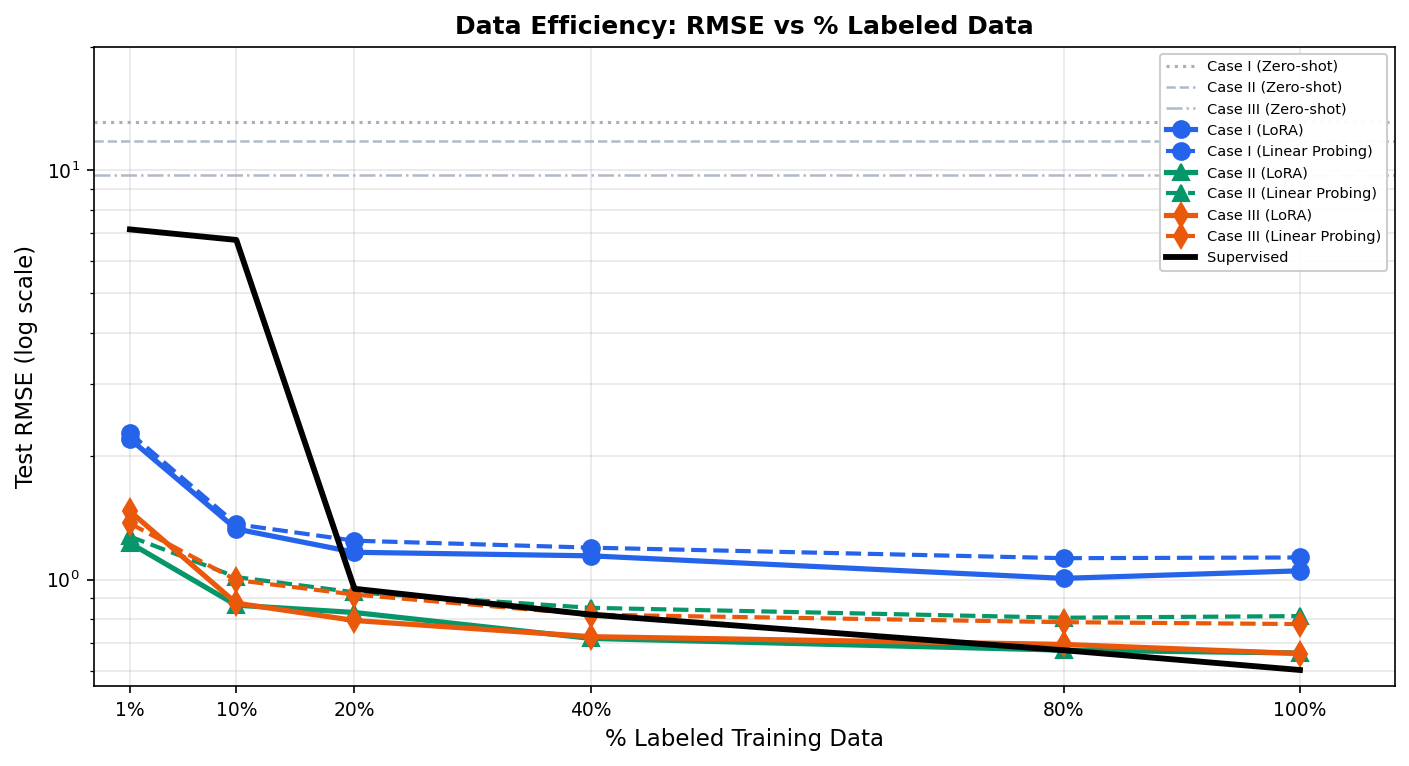} 
\caption{Test RMSE vs. Label Ratio (Scenario I).}
\label{finetune_f_b_studies_data_efficiency}
\vspace{-0.65cm}
\end{figure}

{\bf \textit{ Scenario I: leave-one-floor-out}.} Fig. \ref{finetune_f_b_studies_data_efficiency} evaluates the data efficiency of LoRA and LP across three cross-environment settings in Scenario I. The plot depicts test RMSE against varying fractions of labeled training data, with the \textit{fully supervised upper bound} provided for reference. In all instances, the evaluation and test datasets are kept untouched to maintain the integrity of the performance comparison. We consider three pretrained models with different source-data coverage. In Case I, the model is pretrained using only data from the neighboring floor (Floor 0). In Case II, it is pretrained using data from all other floors in Building 1. In Case III, it is pretrained using all available data except that from Floor 1 in Building 1. For downstream adaptation, we evaluated two strategies: LoRA (represented by solid lines) and LP (represented by dashed lines).  

Several key observations can be drawn from Fig. \ref{finetune_f_b_studies_data_efficiency}. First, adaptation difficulty varies across the three configurations, with Case III    consistently achieving the lowest RMSE overall. Case III tracks the fully supervised upper bound most closely as the labeled fraction increases, demonstrating the strongest transferability. This suggests that pretraining on sufficiently diverse data allows the model to learn highly generalizable representations.  Second, a comparison between adaptation and supervised learning from scratch reveals a distinct \textit{crossover} effect. In the low-label regime (1\%–10\%), PEFT methods provide a substantial performance advantage—maintaining RMSE values nearly an order of magnitude lower than the supervised-from-scratch baseline. This highlights OmniLoc's  robustness in few-shot scenarios, where it successfully leverages its diverse pretraining. Conversely, as the labeled fraction exceeds 20\%, the supervised-from-scratch model improves rapidly, eventually achieving the lowest overall RMSE at 100\% data. This indicates that while OmniLoc offers a superior starting point, the model can still successfully specialize when a large volume of environment-specific data is available. Third, both LoRA and LP remain highly competitive throughout the evaluation. In Case III specifically, LoRA  maintains a slight but consistent edge over LP, further validating the effectiveness of fine-tuning low-rank adapters for cross-environment localization. Overall, Fig. \ref{finetune_f_b_studies_data_efficiency} demonstrates that OmniLoc facilitates effective adaptation and remains robust even when labeled data is extremely scarce.

{\bf \textit{ Scenario II: leave-one-building-out}.} Comparative performance under Scenario II is illustrated in Fig. \ref{finetune_case3_curves}, which plots test RMSE as a function of labeled training data. We compare LoRA and LP against zero-shot and fully supervised baselines across two representative buildings: B1 (highest data volume) and B13 (largest number of floors).

As shown in Fig. \ref{finetune_case3_curves}, both PEFT methods achieve substantial error reductions relative to zero-shot transfer, with the steepest gains occurring between the 1\% and 10\% labeling regimes (low data regime). Notably, in this low data setting, PEFT methods outperform supervised learning from scratch, indicating that even limited supervision, when applied to pretrained representations, enables more effective cross-dataset adaptation than purely supervised approaches. This result further underscores the value of developing large-scale localization models and the necessity of pretraining them on diverse datasets to learn transferable representations.
Across all labeling ratios, LoRA consistently surpasses LP, suggesting that adapting a small subset of internal parameters is more effective than restricting optimization to the output head. This advantage is especially pronounced in the low-label regime, where the ability to refine latent representations is crucial for capturing environment-specific signal characteristics.


Moreover, as the labeled fraction increases, the RMSE of both PEFT methods follows a power-law decay, eventually approaching an asymptotic plateau that reflects diminishing returns from additional annotations. Although the performance gap between PEFT methods and the fully supervised benchmark narrows substantially in data-constrained regimes, a residual gap persists even at higher data fractions (e.g., $\geq$10\% in B1 and $\geq$40\% in B13). This behavior suggests that Transformers trained on heterogeneous localization data may be susceptible to catastrophic forgetting \cite{kirkpatrick2017overcoming}, where the constraints imposed by parameter-efficient updates limit the model’s ability to fully match the performance of end-to-end retraining. Overall, Fig. \ref{finetune_case3_curves} indicates that OmniLoc supports highly sample-efficient adaptation under Scenario II, with LoRA emerging as the most effective strategy for balancing annotation cost and localization accuracy.

\begin{figure}[!t]
	\centering
	\includegraphics[width=0.8\linewidth]{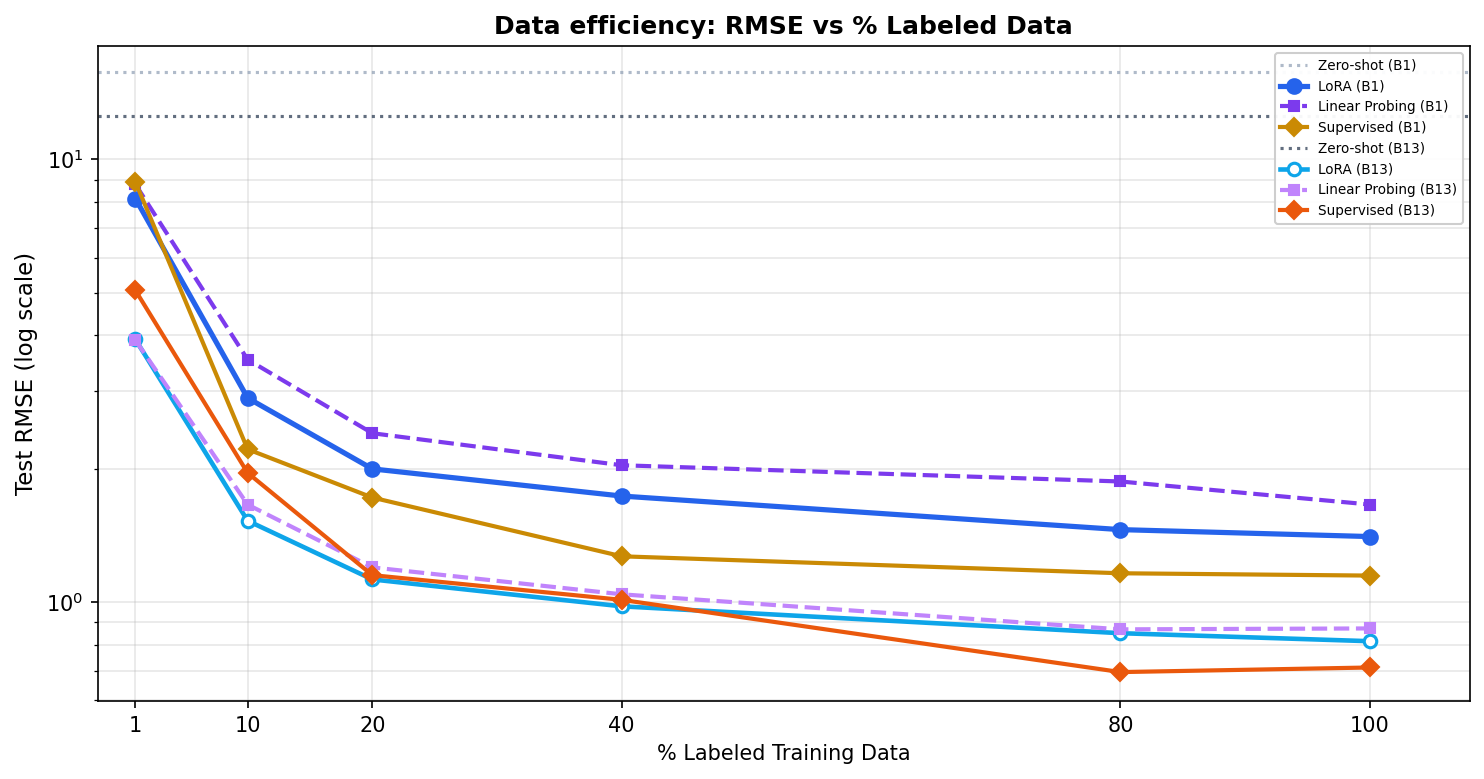} 
	\caption{Test RMSE vs. Label Ratio (Scenario II).}
	\label{finetune_case3_curves}
 \vspace{-0.6cm}
\end{figure}

{\bf \textit{ Scenario III: cross-dataset evaluation.}}
Fig. \ref{finetune_wild_curves} plots RMSE as a function of labeled-data fraction in an even more challenging Scenario III on the WILD dataset, comparing LoRA, LP, and the fully supervised upper bound across two environments, ENV1 and ENV2. Several trends emerge. First, in both environments, localization error decreases sharply as supervision increases from extremely scarce labels to modest labeling ratios, indicating that even limited target-domain annotation yields substantial adaptation gains. Beyond this low-label regime, improvements become more gradual, reflecting diminishing returns from additional labels. One plausible explanation is that, under such challenging conditions, PEFT methods have limited capacity to further refine representations, as only a small subset of model parameters is updated during adaptation.

Second, LoRA consistently outperforms LP across all labeling ratios in both ENV1 and ENV2, with the advantage most pronounced in the low-label regime. This result suggests that effective cross-dataset adaptation requires not only tuning the final predictor, but also updating a small subset of internal model parameters to better align the learned representation with the target-domain signal characteristics. Third, \textit{adaptation difficulty differs across environments}. In ENV1, LoRA achieves low RMSE with limited supervision and remains close to the \textit{fully supervised upper bound} throughout. In ENV2, both methods exhibit higher error, indicating a more severe domain shift; nevertheless, LoRA maintains a clear advantage over LP and steadily approaches the supervised upper bound as the labeled fraction increases.


\begin{figure}[!t]
	\centering
	\includegraphics[width=0.9\linewidth]{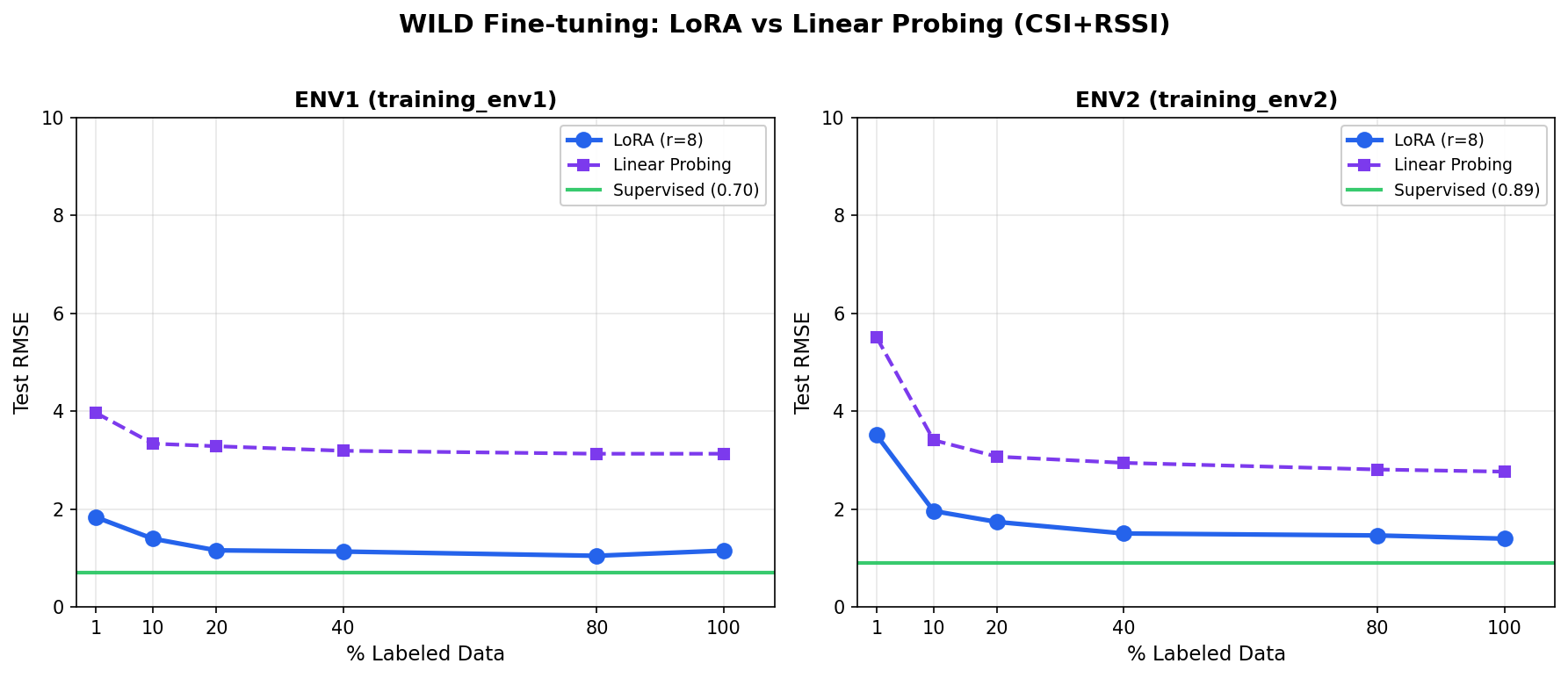} 
	\caption{Test RMSE versus the percentage of labeled data (Scenario III).}
	\label{finetune_wild_curves}
 \vspace{-0.6cm}
\end{figure}


Overall, the results in Figs. 7–10 show that, with appropriate fine-tuning strategies, OmniLoc supports effective and data-efficient cross-dataset adaptation. In particular, LoRA offers a favorable tradeoff between labeling cost and localization accuracy, achieving strong performance with limited supervision while consistently outperforming LP across all environments, providing a positive answer to {\bf Q3}. 

\subsection{Ablation Studies and Complexity}

\subsubsection{Ablation Studies}

\begin{table}[t]
\centering
\caption{Ablation study on different input combinations, network components, and loss terms.} 
\label{tab:ablation}

\scriptsize
\setlength{\tabcolsep}{3pt}
\renewcommand{\arraystretch}{1.05}

\begin{tabular}{c ccc ccc ccc c}
\toprule
\multirow{2}{*}{Case} 
& \multicolumn{3}{c}{Input} 
& \multicolumn{3}{c}{Component} 
& \multicolumn{3}{c}{Loss} 
& RMSE $\downarrow$ \\

\cmidrule(lr){2-4} \cmidrule(lr){5-7} \cmidrule(lr){8-10}

& CSI & RSSI & SINR 
& UIT   & MPAA & GLO 
& $\mathcal{L}_{\mathrm{bld}}$   & $\mathcal{L}_{\mathrm{flr}}$ & $\mathcal{L}_{\mathrm{reg}}$ 
& \\

\midrule
\multicolumn{11}{c}{\textbf{Input Ablation}} \\
\midrule

1 & \cmark & \xmark & \xmark & \cmark & \cmark & \cmark & \cmark & \cmark & \cmark & 1.554  \\
2 & \xmark & \cmark & \xmark & \cmark & \cmark & \cmark & \cmark & \cmark & \cmark & 1.787 \\
3 &  \xmark & \cmark  & \cmark  & \cmark & \cmark & \cmark & \cmark & \cmark & \cmark & 1.691 \\
4 & \cmark & \cmark & \xmark & \cmark & \cmark & \cmark & \cmark & \cmark & \cmark & 1.545 \\
5 & \cmark & \xmark & \cmark  & \cmark & \cmark & \cmark & \cmark & \cmark & \cmark & 1.243 \\

\midrule
\multicolumn{11}{c}{\textbf{Component Ablation}} \\
\midrule

6  & \cmark & \cmark & \cmark & \xmark & \cmark & \cmark & \cmark & \cmark & \cmark & 1.366 \\
7  & \cmark & \cmark & \cmark & \cmark & \xmark & \cmark & \cmark & \cmark & \cmark & 1.463 \\
8  & \cmark & \cmark & \cmark & \cmark & \cmark & \xmark & \cmark & \cmark & \cmark & 1.325 \\

\midrule
\multicolumn{11}{c}{\textbf{Loss Ablation}} \\
\midrule

9 & \cmark & \cmark & \cmark & \cmark & \cmark & \cmark & \xmark & \xmark & \cmark & 1.374 \\
10 & \cmark & \cmark & \cmark & \cmark & \cmark & \cmark & \cmark & \xmark & \cmark & 1.291 \\
11 & \cmark & \cmark & \cmark & \cmark & \cmark & \cmark & \xmark & \cmark & \cmark & 1.245 \\
\midrule
\multicolumn{11}{c}{\textbf{Full Version}} \\
\midrule
12 & \cmark & \cmark & \cmark & \cmark & \cmark & \cmark & \cmark & \cmark & \cmark & \textbf{0.990} \\
\bottomrule
\end{tabular}
\vspace{-0.6cm}
\end{table}

Tab. \ref{tab:ablation} reports an ablation study on input modalities, architectural components, and loss design. For the input settings, CSI alone achieves lower error than RSSI alone, reducing RMSE from 1.787 to 1.554 (13.0\%). This is reasonable because CSI retains fine-grained spatial and frequency-domain information across subcarriers, whereas RSSI is only a coarse aggregate measurement. Adding SINR further improves both settings, but the gain is larger when CSI is used: RMSE decreases from 1.554 to 1.359 (12.5\%) for CSI, compared with a reduction from 1.787 to 1.662 (7.0\%) for RSSI. This pattern suggests that SINR mainly acts as a quality indicator that complements feature-rich measurements more effectively than coarse ones. In other words, when the input already contains detailed signal structure, SINR helps the model judge how reliable that structure is; when the input is limited to RSSI, the benefit is necessarily smaller. Combining CSI and RSSI yields only a marginal improvement over CSI alone, indicating that RSSI contributes little additional information beyond what is already encoded in CSI, likely because RSSI can be viewed as a heavily compressed summary of the same underlying signal.

The component ablation further shows that every module contributes to the final performance. Removing UIT increases RMSE to 1.366, which is 16.5\% higher than that of the full model. Removing MPAA increases RMSE to 1.463, corresponding to a 24.7\% degradation, while removing GLO increases RMSE to 1.325, or a 13.0\% degradation. These results indicate that all three modules are beneficial, with MPAA contributing the largest performance gain.

For the loss-function ablation, every partial loss variant performs worse than the full objective. Relative to the complete model, RMSE increases to 1.374, 1.291, and 1.245, corresponding to degradations of 17.1\%, 10.1\%, and 6.1\%, respectively. This consistent trend indicates that each loss term contributes complementary supervision, and removing any part weakens the training signal. As a result, the full model, which jointly incorporates all inputs, components, and loss terms, achieves the best overall performance with the lowest RMSE of 0.990. Overall, these results confirm that the effectiveness of the framework comes not from any single design choice alone, but from the synergy among its components.

\subsubsection{Complexity Analysis}


\begin{table}[t]
\centering
\caption{Model efficiency comparison. Params: total parameters; Train: trainable parameters; FLOPs: floating point operations; Inf: inference latency.   } 
\label{tab:model_efficiency}
\begin{tabular}{lcccccc}
\toprule
Model & Params (MB) & Train (MB) & FLOPs (G) & Inf (ms) 
\\
\midrule
CNN & 4.043 & 4.043 & 23.34 & 4.71 
\\
LWM & 4.456 & 4.456 & 88.55 & 6.29 
\\
BERT & 4.049 & 4.049 & 267.32 & 17.49 
\\
OmniLoc & 4.456 & 4.456 & 88.55 & 6.49 
\\
LoRA & 4.505 & 0.997 & 85.18 & 5.93 
\\
LP & 4.456 & 0.948 & 75.08 & 5.31 
\\
\bottomrule
\end{tabular}
\vspace{-0.5cm}
\end{table}
 
Tab. \ref{tab:model_efficiency} compares the efficiency of different localization models in terms of total parameters, trainable parameters, FLOPs, and inference latency. Among the baselines, CNN is the most lightweight, requiring only 23.34 G FLOPs and 4.71 ms latency, whereas BERT incurs the highest computational cost at 267.32 G FLOPs and 17.49 ms. LVM and OmniLoc have the same model size and nearly identical complexity, each with 4.456 MB parameters and 88.55 G FLOPs, indicating that OmniLoc’s accuracy gains do not rely on increased model scale.

PEFT methods further improve OmniLoc’s efficiency. OmniLoc (LoRA) reduces the number of trainable parameters from 4.456 MB to 0.997 MB while keeping FLOPs nearly unchanged and slightly lowering inference latency. OmniLoc (LP) is even more efficient, requiring only 0.948 MB trainable parameters, 75.08 G FLOPs, and 5.31 ms latency. These results show that OmniLoc not only achieves strong localization performance, but also supports efficient adaptation through lightweight fine-tuning, making it well suited for deployment in resource-constrained environments. 






\section{Conclusions}

In this paper, we presented OmniLoc, an environment-interactive foundation model for anchor-free indoor localization from heterogeneous wireless measurements. To the best of our knowledge, OmniLoc is the first foundation-model-based framework tailored to this task. By unifying diverse wireless signals with a shared tokenization scheme, modeling AP-aware geometric structure with a geometry-aware Transformer, and enforcing geometric consistency through a dedicated estimation head, OmniLoc achieves strong localization performance across diverse indoor environments. Results on both a large-scale in-house dataset and a public benchmark show that OmniLoc consistently outperforms state-of-the-art baselines. Beyond the full framework, its key design components also transfer effectively to other localization backbones, suggesting that the gains arise from general and reusable principles. OmniLoc further shows strong robustness under cross-environment and cross-dataset evaluation, demonstrating good generalization to unseen deployments. Overall, these results highlight OmniLoc as a unified, accurate, and extensible solution for wireless indoor localization, and point to the promise of foundation-model design for anchor-free localization with widely available WiFi networks.

\textbf{Limitations:} Our current design has several limitations. First, it relies only on CSI magnitude and therefore does not exploit phase information, which may contain additional localization cues despite being noisy and difficult to calibrate in practice. Better phase error modeling and correction \cite{xie2015precise, ibrahim2018verification, tong2026error} could further improve performance. Second, OmniLoc currently uses only coarse-grained geometric priors, such as floor and building indices, and does not yet incorporate fine-grained environmental context. Extending the framework with complementary modalities, such as LiDAR, IMU, or cameras, is a promising direction for enriching geometric awareness and further improving localization accuracy, potentially toward millimeter-level precision.

\appendices

\section{THEORETICAL RESULTS and Proofs}
\label{sec:appendix}

Throughout this section, the transformer encoder output is
$\mathbf{X} \equiv \mathbf{X}_{\mathrm{enc}} \in \mathbb{R}^{L \times d}$
(batch index suppressed), where $L = 1 + N_1$ and $N_1$ is the number of
locally observed APs.
The scoring matrix is $W_p \in \mathbb{R}^{d \times M}$ with columns
$w_m \in \mathbb{R}^d$ ($m = 1,\ldots,M$).
The attention weights are
$\mathbf{W} = \mathrm{softmax}_L(\mathbf{X}W_p) \in \mathbb{R}^{L \times M}$,
where $\mathrm{softmax}_L$ normalizes over the sequence dimension.
The $m$-th probe summary is
$\mathbf{S}_m = \mathbf{X}^\top\mathbf{W}_{:,m} \in \mathbb{R}^d$, and the
final pooled vector is
$\mathbf{h} = W_f\,\mathrm{vec}(\mathbf{S}) + b_f \in \mathbb{R}^d$,
where $W_f \in \mathbb{R}^{d \times Md}$ fuses all probe summaries.
The following analysis builds on the Deep Sets theorem~\cite{zaheer2017deep},
which characterizes all permutation-invariant set functions.

\begin{theorem}[Deep Sets \cite{zaheer2017deep}]
\label{thm:deepsets}
A function $f : \mathcal{X}^{(L)} \to \mathbb{R}$ on finite multisets
is permutation-invariant if and only if it decomposes as:
\begin{equation}
  f\!\left(\{x_1,\ldots,x_L\}\right)
  = \rho\!\left(\sum_{l=1}^{L} \phi(x_l)\right),
  \label{eq:deepsets}
\end{equation}
for suitable $\phi : \mathcal{X} \to \mathbb{R}^K$ and
$\rho : \mathbb{R}^K \to \mathbb{R}$.
\end{theorem}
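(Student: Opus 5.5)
The plan is to prove the two directions separately. The ``if'' direction is immediate. Any function of the form \eqref{eq:deepsets} is permutation-invariant because $\sum_{l=1}^{L}\phi(x_l)$ does not depend on the order of the $x_l$, and $\rho$ is applied after the sum. All the work is in the ``only if'' direction. The strategy there is to build a map $\Phi(\{x_1,\ldots,x_L\}) = \sum_{l}\phi(x_l)$ that is \emph{injective} on multisets of size $L$ (the theorem fixes the cardinality at $L$, so this suffices). Then set $\rho := f\circ\Phi^{-1}$ on the image of $\Phi$, and extend $\rho$ arbitrarily elsewhere. Permutation invariance of $f$ is exactly what makes $f$ a well-defined function of the multiset, so that $f = \rho\circ\Phi$.

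\emph{Countable case.} First treat a countable domain $\mathcal{X}$. Fix an injection $c:\mathcal{X}\to\mathbb{N}$ and take $K=1$ with $\phi(x) = (L+1)^{-c(x)}$. A multiset of size $L$ is determined by its multiplicities $n_k\le L$ on each code $k$. Its image $\sum_k n_k (L+1)^{-k}$ is a base-$(L+1)$ expansion whose digits $n_k$ are all at most $L$, so it is unique and $\Phi$ is injective.

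\emph{Uncountable case.} For the uncountable case, e.g.\ $\mathcal{X}\subseteq\mathbb{R}$, use power sums: $\phi(x) = (x, x^2,\ldots,x^L)\in\mathbb{R}^L$. By Newton's identities the power sums $p_1,\ldots,p_L$ determine the elementary symmetric polynomials $e_1,\ldots,e_L$. These are the coefficients of $\prod_l (t-x_l)$, whose roots with multiplicity recover the multiset, so $\Phi$ is injective.

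\emph{Vector-valued inputs.} For $\mathcal{X}\subseteq\mathbb{R}^d$ (the setting of the encoder tokens $\mathbf{X}_{\mathrm{enc}}$), take $\phi$ to be the vector of all monomials of total degree at most $L$. The resulting multisymmetric power sums generate the ring of multisymmetric polynomials, and this ring separates orbits of the symmetric group acting on $(\mathbb{R}^d)^L$. Alternatively, one can reduce to the scalar case by projecting onto finitely many generic directions, or simply encode each token by a real number when only measurability is needed.

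The main obstacle is the uncountable, vector-valued case. Injectivity there requires the orbit-separation property of multisymmetric polynomials, and if one also wants $\rho$ to be continuous, one needs the inverse of $\Phi$ to be continuous. For continuity, the first approach to try is compactness: restrict to a compact domain, so that $\Phi$ is a continuous injection from the compact quotient space $\mathcal{X}^L/S_L$. Its inverse on the image is then automatically continuous, and this transfers continuity of $f$ to $\rho$. The theorem as stated asks only for ``suitable'' $\phi,\rho$, so the construction can fall back on the purely set-theoretic version if the topological refinement becomes delicate.
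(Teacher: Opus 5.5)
Your proposal is correct, and its skeleton matches the paper's sketch. Sufficiency follows from the order-independence of the sum. Necessity follows by making $\Phi(\{x_1,\ldots,x_L\})=\sum_l\phi(x_l)$ injective on size-$L$ multisets and setting $\rho=f\circ\Phi^{-1}$ on its image. Where you differ is in the injective encoding and in scope.

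The paper treats only countable $\mathcal{X}$ and sends each element to its own basis vector, so $\Phi$ is literally the multiplicity vector. This is the most transparent choice and needs no digit-uniqueness argument. However, it requires $K=|\mathcal{X}|$, so for countably infinite $\mathcal{X}$ it does not land in a finite-dimensional $\mathbb{R}^K$ as the statement demands. Your code $\phi(x)=(L+1)^{-c(x)}$ packs that same multiplicity vector into a single real number ($K=1$). Taking the base to be $L+1$, rather than a fixed base suited only to sets as in the original construction of \cite{zaheer2017deep}, is exactly what keeps the finite base-$(L+1)$ expansion unique when multiplicities go up to $L$.

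Your power-sum and multisymmetric power-sum constructions then cover what the paper's sketch defers entirely: uncountable domains $\mathcal{X}\subseteq\mathbb{R}$ and $\mathcal{X}\subseteq\mathbb{R}^d$. That is the setting the paper actually needs when it reads MPAA as a Deep Sets instance over real-valued encoder tokens, and the countable argument alone does not reach it. The cost is heavier algebraic input: Newton's identities, generation of the multisymmetric invariants by power sums of degree at most $L$, and orbit separation for finite groups.

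One small point on your continuity remark: compactness gives continuity of $\rho$ only on the compact image of $\Phi$. To get a continuous $\rho$ on all of $\mathbb{R}^K$ you still need an extension step such as Tietze's theorem. The statement as given does not require continuity, so this does not affect correctness.
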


\begin{proof}[Proof of Theorem~\ref{thm:deepsets} (sketch)]
\textit{Sufficiency ($\Leftarrow$):}
Any function of the form $\rho(\sum_l \phi(x_l))$ is permutation-invariant
because summation commutes with reordering.

\textit{Necessity ($\Rightarrow$):}
For a countable $\mathcal{X}$, one can construct an injective $\phi$ that
maps each element to a distinct basis vector, making the sum a count vector
that encodes multiset membership.
The function $\rho$ can then recover any permutation-invariant $f$ from this
representation.
See~\cite{zaheer2017deep} for the full construction.
\end{proof}

\begin{remark}[MPAA as a Deep Sets instance]
\label{rem:mpaa-ds}
MPAA is a concrete learnable instantiation of~\eqref{eq:deepsets}:
the transformer encoder plays the role of $\phi$, with $\phi(x_l) = \mathbf{X}_{l,:}$,
and the probe-weighted aggregation and linear fusion $W_f\,\mathrm{vec}(\mathbf{S}) + b_f$
implement $\rho$.
\end{remark}

\begin{proposition}[Mean pooling as a special case]
\label{prop:mean}
When $W_p = \mathbf{0}_{d \times M}$, every probe of MPAA reduces to
mean pooling: $\mathbf{S}_m = \bar{\mathbf{X}}$ for all $m$.
\end{proposition}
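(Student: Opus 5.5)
The plan is a direct substitution into the definitions given at the start of this appendix. With $W_p = \mathbf{0}_{d\times M}$, the score matrix $\mathbf{U} = \mathbf{X}W_p$ is the zero matrix in $\mathbb{R}^{L\times M}$. Every column $\mathbf{U}_{:,m}$ is therefore the zero vector, whatever the encoder output $\mathbf{X}$ is.

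Next I will evaluate $\mathrm{softmax}_L$ column by column. For each $m$ and each $l$,
\begin{equation*}
W_{l,m} = \frac{e^{0}}{\sum_{l'=1}^{L} e^{0}} = \frac{1}{L},
\end{equation*}
so $\mathbf{W}_{:,m} = \tfrac{1}{L}\mathbf{1}_L$ for every probe. This is where it matters that the softmax normalizes over the sequence dimension and not across heads. If it normalized across heads, the weights would be $1/M$ and would not sum to one over tokens.

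Then I substitute into $\mathbf{S}_m = \mathbf{X}^\top \mathbf{W}_{:,m}$, which gives
\begin{equation*}
\mathbf{S}_m = \frac{1}{L}\mathbf{X}^\top\mathbf{1}_L = \frac{1}{L}\sum_{l=1}^{L}\mathbf{X}_{l,:} = \bar{\mathbf{X}},
\end{equation*}
independently of $m$. This is exactly the mean pooling of the $L$ encoder tokens, so every probe reduces to mean pooling, as claimed.

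There is no real obstacle here; the computation is immediate. The one point to address is which tokens $\bar{\mathbf{X}}$ averages over. In the appendix's setting, $\mathbf{X}$ has $L = 1+N_1$ rows and no mask enters the pooling softmax, so the mean is over all $L$ tokens, including the GLOBAL token. I will state this explicitly.

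I may also remark on a consequence. Since all $M$ summaries coincide, $\mathbf{h} = W_f\,\mathrm{vec}(\mathbf{S}) + b_f$ becomes an affine function of $\bar{\mathbf{X}}$ alone. This shows that MPAA contains mean pooling followed by a linear layer as a special case, and hence can do at least as well as mean pooling.
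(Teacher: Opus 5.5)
Your proposal is correct and follows the paper's proof essentially verbatim: the zero scoring matrix gives zero logits, the sequence-wise softmax of $\mathbf{0}_L$ is uniform at $1/L$, and substituting into $\mathbf{S}_m = \mathbf{X}^\top\mathbf{W}_{:,m}$ yields $\bar{\mathbf{X}}$ for every $m$. Your added remarks are accurate details the paper leaves implicit: the average runs over all $L = 1+N_1$ tokens, including GLOBAL, because no mask enters the pooling softmax, and $\mathbf{h}$ then becomes affine in $\bar{\mathbf{X}}$.
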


\begin{proof}
$\mathbf{U}_{:,m} = \mathbf{X}\,\mathbf{0} = \mathbf{0}_L$.
By symmetry, $\mathrm{softmax}(\mathbf{0}_L)_l = e^0 / (L\,e^0) = 1/L$
for all $l$.
Therefore $\mathbf{S}_m = \sum_l (1/L)\,\mathbf{X}_{l,:} = \bar{\mathbf{X}}$.
\end{proof}
 
Proposition~\ref{prop:mean} establishes that mean pooling is always
reachable by MPAA (by setting $W_p = \mathbf{0}$), so
$\mathcal{F}_{\mathrm{mean}} \subseteq \mathcal{F}_M$.
The following lemma shows the inclusion is strict: there exist functions
computable by MPAA that no mean-pooling MLP can represent.

\begin{lemma}[Strict enrichment over mean pooling]
\label{lem:strict}
Let $\mathcal{F}_{\mathrm{mean}}$ be the function class of mean pooling
followed by an MLP, and $\mathcal{F}_M$ the class of $M$-head MPAA
followed by $W_f$.
For $M \geq 2$:
\begin{equation}
  \mathcal{F}_{\mathrm{mean}} \subsetneq \mathcal{F}_M.
\end{equation}
\end{lemma}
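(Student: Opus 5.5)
Two things need to be shown: the inclusion $\mathcal{F}_{\mathrm{mean}} \subseteq \mathcal{F}_M$, and a separating example that lies in $\mathcal{F}_M$ but not in $\mathcal{F}_{\mathrm{mean}}$.

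For the inclusion, Proposition~\ref{prop:mean} does most of the work. With $W_p=\mathbf{0}$, every probe returns $\bar{\mathbf{X}}$. The fusion $W_f$ can then be chosen to read off one copy, so that $\mathbf{h}=\bar{\mathbf{X}}$ up to an affine map. The MLP that follows mean pooling in $\mathcal{F}_{\mathrm{mean}}$ is absorbed into the downstream heads, which are shared by both classes. To make the comparison fair, I will fix the convention that both classes consist of a pooled representation followed by the same family of MLP heads. Under that convention, every member of $\mathcal{F}_{\mathrm{mean}}$ is realized in $\mathcal{F}_M$.

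For strictness, I will give a concrete separating pair of input sets. The key fact is that mean pooling is blind to anything except the empirical average of the tokens. So I pick two token multisets with the same mean but different extreme structure, and then show that some $f\in\mathcal{F}_M$ distinguishes them. Any $g\in\mathcal{F}_{\mathrm{mean}}$ takes identical values on the two sets, because it is a function of $\bar{\mathbf{X}}$ alone.

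I will work with $d=1$ to keep things concrete; higher dimensions embed this by padding. I use sets of equal size $L$ so that mean pooling cannot exploit set size. Take $A=\{0,0,3\}$ and $B=\{1,1,1\}$, both with mean $1$. For one probe, set $w_1=\beta>0$ large. Its softmax weights then concentrate on the maximum token, so $\mathbf{S}_1\to\max_l x_l$. This gives $3$ on $A$ and $1$ on $B$. Since $\beta$ is finite, I should pick a specific value, e.g. $\beta=5$, and compute directly that $\mathbf{S}_1(A)\neq \mathbf{S}_1(B)=1$. This avoids taking a limit: on $B$ the weights are uniform, so $\mathbf{S}_1(B)=1$ exactly, while $\mathbf{S}_1(A)=3e^{15}/(2+e^{15})\neq 1$. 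The second probe uses $w_2=0$, which gives the mean, so $M\ge2$ is satisfied and the $M=1$ case is not even needed. Choosing $W_f$ to output $\mathbf{S}_1$ gives $f(A)\neq f(B)$. This exhibits $f\in\mathcal{F}_M\setminus\mathcal{F}_{\mathrm{mean}}$.

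The main obstacle is being precise about what the class $\mathcal{F}_{\mathrm{mean}}$ is allowed to see. If $\phi$ (the encoder) is shared and trainable in both classes, mean pooling after a nonlinear $\phi$ is Deep-Sets-universal by Theorem~\ref{thm:deepsets}, and the separation would fail. I will therefore state explicitly that the comparison is at the pooling layer, with the encoder output $\mathbf{X}$ held fixed. Under that convention the argument above goes through. I will add a remark that this is the sense in which MPAA enriches the pooling operator, rather than the end-to-end function class.
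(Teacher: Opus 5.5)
Your argument is correct and follows the paper's skeleton: containment comes from Proposition~\ref{prop:mean} with $W_p=\mathbf{0}$, and strictness from a sharpened softmax probe that reacts to extreme tokens. The strictness step, however, is genuinely different. The paper fixes the target $g(\mathbf{X})=\max_l X_{l,1}$ and sends the temperature $\alpha\to\infty$. Every finite $\alpha$ yields only a softmax-weighted average, so this places $g$ in the closure of $\mathcal{F}_M$ rather than in $\mathcal{F}_M$. Concluding $\mathcal{F}_M\not\subseteq\mathcal{F}_{\mathrm{mean}}$ then needs an unstated step: pointwise limits of functions of $\bar{\mathbf{X}}$ are still functions of $\bar{\mathbf{X}}$. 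You instead take one finite-temperature member of $\mathcal{F}_M$ and two equal-mean inputs it separates, so membership is exact and non-membership is immediate. The paper's route buys an interpretable target (the dominant-AP maximum); yours buys rigor without limits.

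Your two conventions also repair real problems in the paper's version. First, as literally defined (a nonlinear MLP after mean pooling versus only the affine $W_f$ after MPAA), containment is false. On inputs whose tokens all equal some $t$, every probe is uniform and $\mathbf{h}$ is affine in $t$, so the shared downstream MLP heads are genuinely needed. Second, holding $\mathbf{X}$ fixed rather than trainable is exactly what the paper's part (a) tacitly assumes, despite Remark~\ref{rem:mpaa-ds}.

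Two small items to tidy. Since your $f$ now includes an MLP head, choose one that keeps $\mathbf{S}_1(A)\neq\mathbf{S}_1(B)$ apart; a single ReLU unit works because both values are positive. For the architecture's fixed $L=1+N_1\geq 2$, use for example $\{0,\ldots,0,L\}$ versus $\{1,\ldots,1\}$, which any $\beta\neq 0$ separates. Note also that neither proof actually uses $M\geq 2$.
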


\begin{proof}
\textit{Containment} ($\mathcal{F}_{\mathrm{mean}} \subseteq \mathcal{F}_M$):
follows directly from Proposition~\ref{prop:mean}.

\textit{Strictness} ($\mathcal{F}_{\mathrm{mean}} \subsetneq \mathcal{F}_M$):
We exhibit a function in $\mathcal{F}_M \setminus \mathcal{F}_{\mathrm{mean}}$.
Let $g(\mathbf{X}) = \max_{l}\,\mathbf{X}_{l,1}$
(the maximum first-coordinate value across all tokens).

\textit{(a) $g \notin \mathcal{F}_{\mathrm{mean}}$:}
Mean pooling computes $\bar{\mathbf{X}} = \frac{1}{L}\sum_l \mathbf{X}_{l,:}$,
which depends on $\mathbf{X}_{l,1}$ only through the average
$\bar{X}_1 = \frac{1}{L}\sum_l X_{l,1}$.
Since the maximum and the mean of a sequence can differ arbitrarily,
no MLP applied to $\bar{\mathbf{X}}$ can compute $g$ exactly.

\textit{(b) $g \in \mathcal{F}_M$ (approximately):}
Set $w_1 = \alpha\,\mathbf{e}_1$ for head $m = 1$, where $\mathbf{e}_1$
is the first standard basis vector and $\alpha > 0$ is a temperature
parameter; the remaining heads are unconstrained.
The attention weight for token $l$ in head~1 is:
\begin{equation}
  W_{l,1}(\alpha)
  = \frac{e^{\alpha X_{l,1}}}{\sum_{l'} e^{\alpha X_{l',1}}}.
\end{equation}
As $\alpha \to \infty$, the softmax concentrates on the token with the
largest first coordinate:
\begin{equation}
  \mathbf{S}_{1} \;\xrightarrow{\;\alpha \to \infty\;}
  \mathbf{X}_{\hat{l},:},
  \qquad \hat{l} = \arg\max_{l}\, X_{l,1}.
\end{equation}
The first component of $\mathbf{S}_1$ therefore converges to
$\max_l X_{l,1} = g(\mathbf{X})$, which a linear $W_f$ can read off
exactly.
Since $\alpha$ is a learnable parameter, the optimizer can approximate
$g$ to arbitrary precision, so $g \in \mathcal{F}_M$.
\end{proof}




\ifCLASSOPTIONcaptionsoff
  \newpage
\fi



%
 
\bibliographystyle{IEEEtran}
\bibliography{reference}

%








\end{document}